\newtheorem{theorem}{Theorem}
\newtheorem{lemma}[theorem]{Lemma}
\newtheorem{proposition}[theorem]{Proposition}
\newcommand{\BibTeX}{B\kern-.05em{\sc i\kern-.025em b}\kern-.08em\TeX}
\newcommand{\LineComment}[1]{\STATE \(//\) \texttt{#1}}
\def\entitle{Does Machine Bring in Extra Bias in Learning? Approximating Fairness in Models Promptly}
\def\entitletwo{Measuring Model-Induced Discrimination via Efficient Fairness Approximation}
\DeclareMathOperator*{\argmin}{argmin}
\newcommand{\defineq}{\triangleq}
\def\xneg{\mathop{\breve{\bm{x}}}}
\def\xpos{\mathop{\bm{a}}}
\def\xqtb{\mathop{\tilde{\bm{a}}}}
\def\probP{\mathbb{P}}
\def\insx{\bm{x}}
\def\vecw{\bm{w}}
\def\volume{\mathrm{Vol}}
\def\ball{\mathbf{B}}%
\def\volball{\mathrm{Vol}}
\def\newy{\ddot{y}}
\def\comp{\mathcal{O}}
\def\newDist{\mathbf{D}}
\def\newdist{\mathbf{d}}
\def\newFist{\mathbf{D}_f}
\def\newfist{\mathbf{df}}
\def\ppsdisfull{harmonic fairness via manifolds}
\def\ppsdisabbr{\emph{HFM}}
\def\ppsalgfull{Approximation of distance between sets}
\def\ppsalgabbr{\emph{ApproxDist}}
\def\ppssubfull{Acceleration sub-procedure in approximation}
\def\ppssubabbr{\emph{AcceleDist}}
\newcommand{\topequation}{%
  \setlength\abovedisplayskip{1pt}%
  \setlength\belowdisplayskip{1pt}%
  \small%
}
\def\ie{i.e.}
\def\aka{aka.}
\newcommand{\citet}[1]{\citeauthor{#1} \shortcite{#1}}
\newcommand{\citep}{\cite}
\newcommand{\myref}[2]{\ref{#1}\subref{#2}}
\newacronym{ai}{AI}{artificial intelligence}
\newacronym{ml}{ML}{machine learning}
\newacronym{dl}{DL}{deep learning}
\newacronym{fml}{FairML}{fairness in machine learning}
\newacronym{sa}{SA}{sensitive attribute}
\newacronym{dp}{DP}{demographic parity}
\newacronym{eo}{EOdd}{equalised odds}
\newacronym{eopp}{EO}{equality of opportunity}
\newacronym{pqp}{PQP}{predictive parity}
\newacronym{cff}{CFF}{counterfactual fairness}
\newacronym{sp}{SP}{statistical parity}
\newacronym{dr}{DR}{discriminative risk}
\newacronym{hfm}{HFM}{harmonic fairness via manifolds}
\begin{document}

\title{\entitletwo}
\author{Yijun Bian$^*$ and Yujie Luo$^*$%
\thanks{Manuscript received August 17, 2024; revised September 18, 2025; revised February 2, 2026; revised April 5, 2026; accepted June 17, 2026. 
(Correspondence to Yijun Bian.)}%
\thanks{$^*$These authors contributed equally.}%
\thanks{Y. Bian is with the Department of Computer Science, University of Copenhagen, 2100 Copenhagen, Denmark (e-mail: yibi@di.ku.dk).}%
\thanks{Y. Luo is with the Department of Mathematics, National University of Singapore, Singapore 117543, Singapore (e-mail: lyj96@nus.edu.sg).}%
}
\markboth{IEEE Transactions on Neural Networks and Learning Systems,~Vol.~, No.~, Month~Year}
{Anonymous \MakeLowercase{\textit{et al.}}: \entitle}

\maketitle

\begin{abstract}
Providing various machine learning (ML) applications in the real world, concerns about discrimination hidden in ML models are growing, particularly in high-stakes domains. 
Existing techniques for assessing the discrimination level of ML models include commonly used group and individual fairness measures. 
However, these two types of fairness measures are usually hard to be compatible, 
and even two different group fairness measures might be incompatible as well. 
To address this issue, we investigate and evaluate the discrimination level of classifiers from a manifold perspective and propose a fairness measure named ``\emph{\ppsdisfull{} (}\ppsdisabbr\emph{)}'' based on distances between sets. 
Yet the direct calculation of distances might be too expensive to afford, reducing its practical applicability. 
Therefore, we devise an approximation algorithm named ``\emph{\ppsalgfull{} (}\ppsalgabbr\emph{)}'' to facilitate accurate estimation of distances, and we further demonstrate its algorithmic effectiveness under certain reasonable assumptions. 
Empirical results indicate that the proposed fairness measure \ppsdisabbr{} reflects bias from both individual and group fairness aspects and that the proposed \ppsalgabbr{} is effective and efficient. 
\end{abstract}

\begin{IEEEkeywords}
Machine learning, fairness measure, distance between sets, metric space, approximation.
\end{IEEEkeywords}

\section{Introduction}

Numerous real-world applications nowadays have been advanced to assist humans thanks to the flourishing development of machine learning (ML) techniques, such as healthcare, transportation, recruitment, and jurisdiction. 
For instance, many scholars have attempted to develop ML systems to alleviate resource constraints and reduce overall healthcare costs, such as the early diagnosis of Alzheimer's disease and automated detection of suspicious mammographic lesions for biopsy \citep{chua2022tackling}. 
However, deploying ML systems in high-risk scenarios raises concerns regarding unresolved safety issues. These issues include erroneous predictions, lack of confidence in predictions, imperfect generalisability, black-box decision-making, and insensitivity to impact and automation complacency \citep{chua2022tackling}. 
Among them, an area to investigate trustworthy AI is gradually gaining researchers' attention, for example, fairness in machine learning. 

The reason why people worry about the fairness and reliability of ML systems is that discriminative models may perpetuate or even exacerbate inappropriate human prejudices, which harm not only model performance but also society. 
Taking the ML systems deployed for clinical use, for example, 
shifted distributions exist within the collected medical datasets, thus causing prediction failures, as a higher incidence of a disease may be reported in people seeking medical treatment in hospitals, of which the characteristics are different from those of a dataset collected for population studies \citep{azizi2023robust}. 
Besides, potential sources of dataset bias also exist, such as imbalanced proportions for different groups, specific measurement biases captured via imaging devices, systematic biases introduced by human annotators, and distorted research subject to dataset availability. 
Last but not least, inappropriate evaluation for ML models may miss the desired target, and a strong focus on benchmark performance would lead to diminishing returns, where the performance gains of increasingly great efforts are actually smaller and smaller \citep{varoquaux2022machine}. It is worthwhile to choose suitable metrics in evaluation, as improvements in traditional metrics such as accuracy do not necessarily translate to practical clinical outcomes. 
Therefore, it is crucial in the responsible deployment of ML systems to evaluate model performance from both accuracy and fairness.

Unlike precise accuracy, however, the concept of ``fairness'' is interdisciplinary with broadly varying definitions across disciplines, such as law, politics, social sciences, quantitative fields (mathematics, computer science, statistics, economics), and philosophy. 
Even in the field of computer science that we mainly discuss in this paper, there exist so many fairness explanations or measures to choose from when applied to the deployment of ML systems, of which two typical categories are group- and individual-level fairness. 
While individual-level fairness follows the intuition that similar predicted outcomes are expected for similar \emph{individuals} (in other words, similar treatment) given some specific similarity metrics \citep{dwork2012fairness,fleisher2021s}, group-level fairness cares more about the \emph{statistical} disparity among groups divided by certain sensitive/unprivileged/protected characteristics, typically including age, race, gender, disability, pregnancy, and sexual orientation \citep{cdc2021types,eeoc2022type}. 
Typical group-level fairness measures include demographic parity (\aka{} statistical parity) \citep{calders2009building,dwork2012fairness,feldman2015certifying,zemel2013learning}, equalised odds \citep{hardt2016equality}, and predictive parity \citep{ieterich2016demonstrating}, and they respectively correspond to independence, separation, and sufficiency in scenarios for sensitive attributes with binary values \citep{barocas2023fairness}. 
However, it is hard to meet all three criteria (\aka{} independence, separation, and sufficiency) simultaneously, as imposing any two of them at the same time would overconstrain the space to the point where only degenerate solutions remain \citep{barocas2023fairness}. 
Moreover, satisfying group fairness such as statistical parity may violate individual fairness terribly \citep{dwork2012fairness,fleisher2021s,tang2023and}; the inherent trade-off between fairness and accuracy also draws researchers' attention \citep{zhao2022inherent}. 
Therefore, it is necessary to deliberate on the incompatibility of these criteria when evaluating ML model performance.

To this end, we investigate the possibility of assessing the discrimination level of ML models from both individual and group fairness aspects. 
In this paper, we propose a novel fairness measure from a manifold perspective, named ``\emph{\ppsdisfull{} (}\ppsdisabbr\emph{)}'', to fulfil this purpose. 
Intuitively, by viewing instances as points on data manifolds, \ppsdisabbr{} measures discrimination through the geometric discrepancy between manifolds induced by privileged and unprivileged groups.
However, the calculation of \ppsdisabbr{} lies on a core distance between two sets, which might be pretty costly. 
To speed up the calculation and increase its practical applicable values, we propose an approximation algorithm to quickly estimate the distance between sets, named as ``\emph{\ppsalgfull{} (}\ppsalgabbr\emph{)}.'' 
We also further investigate its algorithmic properties under certain reasonable assumptions, in other words, how effective it is to achieve the approximation goal.
Our contribution in this work is fourfold:
\begin{itemize}
\item We propose a fairness measure \ppsdisabbr{} that could reflect the discrimination level of classifiers from both individual and group fairness aspects, which is built on a concept of distances between sets. 
\item We propose an approximation algorithm named ``\ppsalgfull{} (\ppsalgabbr)'' to speed up the estimation of distances between sets, in order to mitigate the disadvantage of its costly direct calculation. 
\item We further investigate the algorithmic effectiveness of \ppsalgabbr{} under certain assumptions and provide detailed explanations. 
\item Comprehensive experiments are conducted to verify the effectiveness of the proposed \ppsdisabbr{} and \ppsalgabbr{}. 
\end{itemize}

\section{Related Work}
In this section, we first introduce the sources of biases and then summarise existing techniques to measure and enhance fairness in turn. 

\paragraph{Sources of biases} 
Before addressing fairness concerns, it is necessary to understand the origins of biases. 
Existing literature identifies two primary sources of unfairness: biases from the data and biases from the algorithm \citep{verma2018fairness}. 
Biased data collected from various sources, like device measurements and historically biased human decisions, directly influence ML algorithms, perpetuating these biases. 
Additionally, missing data, such as instances or values, can introduce disparities between the dataset and the target population, leading to biased outcomes. 
Even with clean data, learning algorithms might yield unfair results due to proxy attributes for sensitive features or tendentious algorithmic objectives. 
For example, optimising aggregated prediction errors can advantage privileged groups over marginalised ones.

\paragraph{Mechanisms to enhance fairness} 
Numerous mechanisms have been proposed to enhance fairness and mitigate biases in ML models, typically categorised as pre-processing, in-processing, and post-processing mechanisms \citep{pessach2023algorithmic,tang2023and,mehrabi2021survey}. 
Pre-processing mechanisms \citep{backurs2019scalable,chierichetti2017fair,calmon2017optimized,tian2024multifair} usually take place when the algorithm is allowed to modify the training data, manipulating features or labels before instances are fed into the algorithm, in order to align the distributions of unprivileged and privileged groups to minimise discrimination. 
For instance, recent scholarship utilises counterfactual samples \citep{yang2026circus,xia2025learning} or counterfactual-like data augmentation \citep{zhu2025fairshap} to promote fairness. 
In contrast, post-processing mechanisms \citep{dwork2018decoupled,corbett2017algorithmic,hardt2016equality} are normally used when the learned model can only be treated as a black box without any ability to modify the training data or learning algorithm, manipulating output scores or decoupling predictions for each group. 
While pre-processing and post-processing mechanisms offer flexibility across classification tasks, they have drawbacks. 
Pre-processing mechanisms suffer from high uncertainty in final accuracy, while post-processing mechanisms often yield inferior results due to the late-stage application in the learning process. 
Additionally, they may not fully achieve individual fairness. 
In-processing mechanisms \citep{zafar2017fairness2,zafar2017fairness1,woodworth2017learning,quadrianto2017recycling,du2024long} incorporate fairness constraints during training by using penalty/regularisation terms, while some of them \citep{agarwal2019fair,agarwal2018reductions} adjust fairness constraints in minimax or multi-objective optimisation settings.  
They usually impose explicit trade-offs between fairness and accuracy in the objective function, yet the designated ML algorithm itself is tightly coupled with them as well. 
It is worth noting that these approaches primarily focus on mitigation rather than measurement, and their effectiveness is relevant to the choice of the fairness measure that is applied. 
It is hard to say which one outperforms the others in all cases; the results depend on the fairness measures, datasets, and even the handling of training-test splits \citep{friedler2019comparative}.

\paragraph{Types of fairness measures} 
Various fairness measures have been proposed to facilitate the design of fair ML models, generally divided into distributive and procedural fairness measures. 
Procedural fairness concerns decision-making processes and includes feature-apriori fairness, feature-accuracy fairness, and feature-disparity fairness \citep{grgic2018beyond}. 
However, these measures are dependent on features and a set of users who view the corresponding features as fair ones and thus still risk hidden discrimination in the data. 
Moreover, these user judgements may change over time, resulting in unstable outcomes and unpredictable computational demands as systems are recalibrated repeatedly. 
In addition, an FAE-based (feature attribution explanation) metric has recently been proposed to assess group procedural fairness, which, however, depends on the specific FAE techniques employed \citep{wang2024procedural}; 
BENN \citep{giloni2022benn} is also proposed for bias estimation, but requires another unsupervised neural network to be trained, adding some computational burdens.

Distributive fairness refers to decision-making outcomes (predictions) and includes unconscious/unawareness fairness, group fairness, individual fairness, and counterfactual fairness \citep{dwork2012fairness, berk2021fairness,vzliobaite2017measuring, joseph2016fairness, nilforoshan2022causal,kusner2017counterfactual}. 
As the simplest, unawareness fairness avoids explicitly using sensitive attributes but does not address biases stemming from associations between protected and unprotected attributes. 
Group fairness focuses on statistical equality among groups defined by sensitive attributes, including but not limited to demographic parity, equalised odds, equality of opportunity/equal opportunity, predictive quality parity, and disparate impact \citep{feldman2015certifying}. 
In contrast, individual fairness is based on the principle that `similar individuals should be evaluated or treated similarly,' where similarity is measured by some certain distance between individuals, while the specified distance also matters a lot. 
Besides, counterfactual fairness aims to explain the sources of discrimination and qualitative equity through causal inference tools. 
Other distributive fairness definitions include fairness through unawareness \citep{dwork2012fairness,gajane2017formalizing}, calibration \citep{chouldechova2017fair}, multicalibration \citep{hebert2018multicalibration,gohar2023survey}, multiaccuracy \citep{kim2019multiaccuracy}, differentially fair and intersectionality \citep{foulds2020intersectional}, and strategic minimax fairness \citep{diana2024minimax}. Yet none of these are quantitative measures, except group benefit ratio and worst-case min-max ratio \citep{ghosh2021characterizing}. 

Moreover, the group fairness measures introduced in different types of fairness may hardly be compatible with each other, for example, the occurrence between equalised odds and demographic parity, or that between equalised calibration and equalised odds \citep{berk2021fairness,pleiss2017fairness,hardt2016equality}. 
Individual and group fairness, such as demographic parity, are also incompatible except in trivial degenerate cases. 
Meeting all three fairness criteria (\ie{} independence, separation, and sufficiency) simultaneously is demonstrated to be challenging, often only achievable in degenerate scenarios \citep{barocas2023fairness}. 
Recently, a fairness measure named `discriminative risk (DR)' has been proposed to evaluate the bias level of classifiers from both individual- and group-fairness aspects \citep{bian2023increasing_alt}, and has been incorporated into fairness-oriented pre-processing frameworks such as FairSHAP \citep{zhu2025fairshap}. 
Besides, achieving higher fairness can often come at the cost of compromised accuracy \citep{menon2018cost}. 
Although some unique scenarios \cite{pessach2021improving,wick2019unlocking} have recently been proposed where fairness and accuracy can be concurrently improved, constrained optimisation with additional fairness constraints likely results in reduced accuracy compared to accuracy-focused optimisation.

\section{Methodology}

In this section, we formally study the measurement of fairness from a manifold perspective. 
A list of some of the standard notations we use is summarised in Table~\ref{tab:notation}. 
\begin{table}[t]
\centering\caption{
Summary of the standard notations we use.
}\label{tab:notation}%
\vspace{-2mm}
\begin{tabular}{cl}
\toprule
\textbf{Notation} & \textbf{Implication} \\
\midrule
$x$ & scalars, denoted by italic lowercase letters \\
$\bm{x}$ & vectors, denoted by bold lowercase letters \\
$X$ & matrices/sets, denoted by italic uppercase letters \\
$\mathsf{X}$ & random variables, denoted by serif uppercase letters \\
$\mathbb{R}$ & real numbers \\
$\mathbb{Z}$ & integers \\
$\mathbb{Z}_+$ & positive integers \\
$\mathbb{P}(\cdot)$ & the probability measure of one random variable \\
$\mathbb{E}(\cdot)$ & the  expectation of one random variable \\
$\mathbb{V}(\cdot)$ & the variance of one random variable \\
$\mathcal{F}$ & the hypothesis space \\
$f(\cdot)$ & models in the space $\mathcal{F}$ \\
$i\in[n]$ & $i\in\{1,2,...,n\}$ for brevity \\
\bottomrule
\end{tabular}
\end{table}
We also use $S\!=\{(\bm{x}_i,y_i)\}_{i=1}^n$ to denote a dataset where the instances are iid. (independent and identically distributed), drawn from an feature/input-label/output space $\mathcal{X}\times\mathcal{Y}$ based on an unknown distribution. 
The feature space $\mathcal{X}$ is arbitrary, and the label space $\mathcal{Y}=\{ 1,2,...,n_c \} \,(n_c\geqslant 2)$ is finite, which could be binary or multi-class classification depending on the number of labels (\aka{} the value of $n_c$). 
Presuming that the considered dataset $S$ is composed of the instances including sensitive attributes, also denoted by $\{(\xneg_i,\xpos_i,y_i)\}_{i=1}^n$, the features of one instance including protected or sensitive attributes is represented as $\bm{x}\defineq (\xneg,\xpos)$, where for clarity $\xneg=[x_1,...,x_{n_x}]^\mathsf{T}$ is non-sensitive attributes (possibly involving proxy attributes though), $n_x\in\mathbb{Z}_+$ is the number of non-sensitive attributes, $\xpos=[a_1,..., a_{n_a}]^\mathsf{T}$ is the sensitive attributes, $n_a\in\mathbb{Z}_+ (n_a\geqslant 1)$ is the number of sensitive attributes allowing multiple attributes, and $a_i\in\mathbb{Z}_+ \,(1\leqslant i\leqslant n_a)$ allows both binary and multiple values. 
A function $f\in\mathcal{F}\!: \mathcal{X}\mapsto\mathcal{Y}$ represents a hypothesis in a space of hypotheses $\mathcal{F}$, of which the prediction for one instance $\bm{x}$ is denoted by $f(\bm{x})$ or $\hat{y}$ for brevity.

\subsection{Model fairness assessment from a manifold perspective}
\label{sec:method,dist}
In this paper, we mainly discuss the fairness measure in scenarios for sensitive attributes with binary values, that is, $a_i\in\mathcal{A}=\{0,1\}$ and $a_i=1$ represents the majority or privileged group for $i\in[n_a]$. 
Note that the observations could be extended to sensitive attributes with multiple values as well. 
In the case of instances with only one sensitive attribute, that is, $n_a=1$ and $\insx=(\xneg,a_1)$, the original dataset $S$ can be divided into a few disjoint sets according to the value of the sensitive attribute $a_1$, that is, 
$S_j=\{(\insx,y)\in S\mid a_1=j\},\,\forall j\in\mathcal{A}$. 
Because $\mathcal{A}=\{0,1\}$ is considered in this paper, we could get $S_0$ and $S_1$, while the division also works for sensitive attributes with multiple values, where $\mathcal{A}=\{0,1,2,...\}$ is a finite set. 
In other words, as $a_i=1$ represents the majority or privileged group, two subsets $S_1$ and $S_0$ denote the privileged group and the unprivileged group(s), respectively. 

Inspired by the principle of individual fairness---similar treatment for similar individuals, \emph{if we view the instances as data points on manifolds, the manifold representing members from the unprivileged group(s) is supposed to be as close as possible to that representing members from the privileged group} (\textbf{motivation}). 
A ``manifold'' refers to a topological space that locally resembles Euclidean space in mathematics. 
Then given a specific distance metric on instances $\newdist(\cdot,\cdot)$\footnote{Here we use the standard Euclidean metric. In fact, any two metrics $\mathbf{d}_1,\mathbf{d}_2$ derived from norms on the Euclidean space $\mathbb{R}^d$ are equivalent in the sense that there are positive constants $c_1,c_2$ such that $c_1\mathbf{d}_1(x,y)\leqslant \mathbf{d}_2(x,y)\leqslant c_2\mathbf{d}_1(x,y)$ for all $x,y\in \mathbb{R}^d$.}, we can define the distance between sets (that is, $S_0$ and $S_1$ here) by
\begin{equation}
\topequation
\begin{split}
\newDist(S_0,S_1) \defineq \max\big\{
    & \max_{(\insx,y)\in S_0}
    \overbrace{
    \min_{(\insx',y')\in S_1}
    \newdist\big( (\xneg,y), ({\xneg}',y') \big)
    }^{\text{for $(\insx,y)$ in $S_0$, to find the nearest point in $\bar{S}_1$}}
    \,,\\
    & \max_{(\insx',y')\in S_1}
    \underbrace{
    \min_{(\insx,y)\in S_0}
    \newdist\big( (\xneg,y), ({\xneg}',y') \big)
    }_{\text{for $({\insx}',y')$ in $S_1$, to find the nearest point in $S_0$}}
\big\} \,,\label{eq:1}
\end{split}%
\end{equation}%
and view it as an approximation of the distance between the manifold of unprivileged groups and that of the privileged group. Notice that the distance defined above satisfies the following basic properties:
\begin{enumerate}[label=(\arabic*)]
\item For any two data sets $S_0,S_1\in \mathcal{X}\times \mathcal{Y}$,\, $\newDist(S_0,S_1)=0$ if and only if $S_0$ equals $S_1$; and 
\item For any sets $S_0,S_1,$ and $S_2$, we have the triangle inequality 
$
    \newDist(S_0,S_2)\leqslant \newDist(S_0,S_1)+\newDist(S_1,S_2) 
    \,.\nonumber
$
\end{enumerate}
Analogously, for a trained classifier $f(\cdot)$, we can calculate 
\begin{equation}
\topequation
\begin{split}
\newFist(S_0,S_1)=\! \max\big\{ 
    & \max_{(\insx,y)\in S_0} \min_{(\insx',y')\in S_1} 
    \newdist\big( (\xneg,\hat{y}), ({\xneg}',\hat{y}') \big) \,,\\
    & \max_{(\insx',y')\in S_1} \min_{(\insx,y)\in S_0} 
    \newdist\big( (\xneg,\hat{y}), ({\xneg}',\hat{y}') \big)
\big\}.\label{eq:2}
\end{split}%
\end{equation}%
We remark that $\newDist(S_0,S_1)$ reflects the biases from the data, and $\newFist(S_0,S_1)$ reflects the biases from the algorithm. Then the following value could be used to reflect the fairness degree of this classifier, that is,
\begin{equation}
\topequation
    \newfist(f)= 
    \frac{\newFist(S_0,S_1)}{\newDist(S_0,S_1)}-1
    \,.\label{eq:3}
\end{equation}%

We name the fairness degree of one classifier by Eq.~\eqref{eq:3} as ``\emph{\ppsdisfull{} (}\ppsdisabbr\emph{)}''. 
\ppsdisabbr{} is designed to 
consider both individual- and group-fairness aspects simultaneously, 
and the reason why it can achieve this goal is that: i) \ppsdisabbr{} is built upon the concept of distance between sets, which includes distances between individuals as a basis; ii) the similarity between instances is indicated by the distance between individuals, while the disparity between two subgroups divided by sensitive attributes is indicated by the distance between sets; and iii) by taking them together, \ppsdisabbr{} is able to reflect fairness from both individual- and group-level aspects.

We further give a few remarks on $\newfist(f)$ defined in Eq.~\eqref{eq:3}:
\begin{itemize}
\item Notice a degenerate case that $\newdist((\xneg,y_1),(\xneg,y_2))=|y_1-y_2|$. We set $X_i=\{\xneg\mid (\insx,y)\in S_i\}$ and $Y_i=\{y\mid (\insx,y)\in S_i\}$ for $i\in \{0,1\}$. If $X\defineq X_0=X_1$, then $\newDist(S_0,S_1)\leqslant \max_{\xneg\in X}\big\{
|y-y'|\big\}.$ Similarly we have $\newDist_f(S_0,S_1)\leqslant \max_{\xneg\in X}\big\{
|f(\insx)-f(\insx')|\big\}.$ In particular, if $\newDist(S_0,S_1)>0$ (resp. $\newDist_f(S_0,S_1)>0$), then there exist two data points with exactly the same non-sensitive attributes, while their sensitive attributes and corresponding labels (predictions) are different.
\item $\newfist(f)$ measures the bias of the classifier concerning the biases from the data. In particular, if $\newfist(f)=0$, then $\newFist(S_0,S_1)=\newDist(S_0,S_1)$, indicating that the classifier does not introduce additional biases beyond those inherent in the data. If $\newfist(f)>0$, then $\newFist(S_0,S_1)>\newDist(S_0,S_1)$, suggesting that the algorithm introduces biases beyond those initially present in the dataset. And as $\newfist(f)$ grows, the relative bias of the classifier with respect to the biases from the dataset increases. Lastly, if $\newfist(f)<0$, then the classifier somehow reduces the bias arising from the original data.
\item Assume that $\newDist(S_0,S_1)=0$, which means that the original dataset is perfectly unbiased. If $\newFist(S_0,S_1)=0$ as well, then we write $\newfist(f)=\frac{0}{0}-1=0$, indicating that the classifier is also unbiased. On the other hand, if $\newFist(S_0,S_1)>0$, then $\newfist(f)=+\infty$, indicating that the classifier introduces biases by providing different outputs for identical data with differing sensitive attributes.
\item As we can see in the definition, 
$\newfist(f)$ is achieved by $\newdist((\xneg,y),({\xneg}',y'))$ for some $(\insx,y)\in S_0$ and $(\insx',y')\in S_1$, and this implies that $\newfist(f)$ is highly sensitive to the instances used in its computation.
\end{itemize}

\ppsdisabbr{} is interpretable in relation to data bias, because it provides a signed, relative notion of model-induced bias with a direct interpretation against the original data manifold discrepancy. 
Note that, unlike common fairness measures, \ppsdisabbr{} rather isolates model-induced bias relative to the bias already present in the data and assesses the algorithmic behaviour. This distinction is important in practice, as it enables practitioners to assess whether a model exacerbates existing disparities beyond what is inherited from the data. The practitioners are therefore advised to accompany other fairness measures, because a model cannot be claimed to be fair only due to $\newfist(f)=0$. 
Also note that $n_0$ and $n_1$ are the number of instances in $S_0$ and $S_1$, respectively. Then the computational complexity of directly calculating Equations~\eqref{eq:1} or \eqref{eq:2} would be $\mathcal{O}(n_0n_1)$, that is, $\mathcal{O}(n_0(n-n_0))$, which is expensive and less practical to be applied on large datasets.

\subsection{A quick approximation of distances between sets for Euclidean spaces}
Given the high computational complexity of directly calculating Equations \eqref{eq:1} and \eqref{eq:2}, that is, $\comp(n^2)$, it would be necessary to speed up the calculation if we intend to use them to measure the discriminative level of classifiers in practice.

Notice that the core operation in Equations~\eqref{eq:1}  and \eqref{eq:2} is evaluating the distance between data points inside $\mathcal{X}\times\mathcal{Y}$. To reduce the number of distance evaluation operations involved in Equations~\eqref{eq:1} and \eqref{eq:2}, we observe that the distance between similar data points tends to be closer than others after projecting them onto a general one-dimensional linear subspace. In fact, let $g: \mathcal{X}\times\mathcal{Y}\to \mathbb{R}$ be a projection, then 
\begin{equation}
    \topequation 
    \label{eq: inequality of projection}
    |g(\insx,y)-g(\insx',y')|\leqslant \mathbf{d}((\xneg,y),({\xneg}',y')) \,.
\end{equation}
To be concrete, one possible candidate for the projection $g$ could be
\begin{subequations}
\topequation
\begin{align}
    g(\insx,y;\vecw)=
    & g(\xneg,\xpos,y;\vecw)
    = [y,x_1,...,x_{n_x}]^\mathsf{T}\vecw 
    \,,\label{eq:4a}\\
    g(\insx,f;\vecw)= 
    & g(\xneg,\xpos,f(\insx);\vecw)
    = [\hat{y},x_1,...,x_{n_x}]^\mathsf{T}\vecw 
    \,,\label{eq:4b}
\end{align}%
\label{eq:4}%
\end{subequations}%
where $\vecw=[w_0,w_1,...,w_{n_x}]^\mathsf{T}$ is a random vector that meets $w_i\in[-1,1]$ for any $i\in\{0,1,...,n_x\}$ and $\sum_{i=0}^{n_x}|w_i|=1$.

Now, we choose a random projection $g: \mathcal{X}\times \mathcal{Y}\to \mathbb{R}$, then we sort all the projected data points on $\mathbb{R}$. According to Eq.~\eqref{eq: inequality of projection}, \emph{it is likely that for the instance $(\insx,y)$ in $S_0$, the desired instance $\argmin_{(\insx',y')\in S_1} \newdist((\xneg,y),({\xneg}',y'))$ would be someone near it after the projection, and vice versa.} Thus, by using the projections in Eq.~\eqref{eq:4a}, we could accelerate this process in Eq.~\eqref{eq:1} by checking several adjacent instances rather than traversing the whole dataset. Analogously, we could accelerate the process in Eq.~\eqref{eq:2} through the projection in Eq.~\eqref{eq:4b} as well. 
For simplification, we record $\{y_i\}_{i=1}^n$ and $\{\hat{y}_i\}_{i=1}^n$ as one denotation, that is, 
$\newy$ could be the true label $y$ or the prediction by the classifier $f(\cdot)$ or $\hat{y}$, 
then we can rewrite Equations~\eqref{eq:1}, \eqref{eq:2}, and \eqref{eq:4} as
\begin{small}%
\topequation
\begin{align}
    \newDist_{\cdot}(S_0,S_1)=& \max\{ 
     \max_{(\insx,y)\in S_0} \min_{(\insx',y')\in S_1} 
     \newdist\big( (\xneg,\newy),({\xneg}',\newy') \big) 
    \,,\nonumber\\
    &\hspace{2.1em} \max_{(\insx',y')\in S_1} \min_{(\insx,y)\in S_0} \newdist\big( ({\xneg},\newy),({\xneg}',\newy') \big)
    \} \,,\label{eq:5}\\
    g(\insx,\newy;\vecw)=&\ g(\xneg,\xpos, \newy;\vecw)
    =[\newy, x_1,...,x_{n_x}]^\mathsf{T} \vecw
    \,,\label{eq:6}
\end{align}%
\end{small}%
respectively. 
Note that $\newDist_{\cdot}(\cdot,\cdot)$ is symmetrical.

\begin{algorithm}[tb]
\small\caption{\small
\ppsalgfull, \aka{} 
\texttt{\ppsalgabbr}$(\{(\xneg_i,\xpos_i)\}_{i=1}^n, \{\newy_i\}_{i=1}^n; m_1,m_2)$
}\label{alg:approx}
\begin{algorithmic}[1]
    \REQUIRE Dataset $S\!=\!\{(\insx_i,y_i)\}_{i=1}^n \!=\!\{(\xneg_i,\xpos_i,y_i)\}_{i=1}^n$, prediction of $S$ by the classifier $f(\cdot)$ that has been trained, that is, $\{\hat{y}_i\}_{i=1}^n$, and two hyperparameters $m_1$ and $m_2$ as the designated numbers for repetition and comparison respectively
    \ENSURE Approximation of distance $\newDist_\cdot(S_0,S_1)$ in Eq.~\eqref{eq:5}
    \FOR{$j$ from $1$ to $m_1$}
    \STATE Take a random vector $\vecw$ from the space $\mathcal{W}=\{ \vecw=[w_0,w_1,...,w_{n_x}]^\mathsf{T} \mid \sum_{i=0}^{n_x}|w_i|=1 \}\subseteq[-1,1]^{1+n_x}$
    \STATE $d_\text{max}^{j}=$ \texttt{\ppssubabbr}$(\{(\xneg_i,\xpos_i)\}_{i=1}^n, \{\newy_i\}_{i=1}^n, \vecw;m_2)$
    \ENDFOR
    \RETURN $\min\{d_\text{max}^j \mid j\in[m_1]\}$ \label{alg:1,ln5} 
\end{algorithmic}
\end{algorithm}
\begin{algorithm}\small
\caption{\small%
\ppssubfull, \aka{} 
\texttt{\ppssubabbr}$(\{(\xneg_i,\xpos_i)\}_{i=1}^n, \{\newy_i\}_{i=1}^n, \vecw;m_2)$
}\label{alg:accele}
\begin{algorithmic}[1]
    \REQUIRE Data points $\{(\xneg_i,\xpos_i)\}_{i=1}^n$, its corresponding value $\{\newy_i\}_{i=1}^n$, where $\newy_i$ could be its true label $y_i$ or prediction $\hat{y}_i$ by the classifier $f(\cdot)$, a random vector $\vecw$ for projection, and a hyperparameter $m_2$ as the designated number for comparison
    \ENSURE Approximation of distance $\newDist_\cdot(S_0,S_1)$ in Eq.~\eqref{eq:5}
    \STATE Project data points onto a one-dimensional space based on Eq.~\eqref{eq:6}, in order to obtain $\{g(\insx_i,\newy_i;\vecw)\}_{i=1}^n$
    \label{line:acc,1}
    \STATE Sort original data points based on $\{g(\insx_i,\newy_i;\vecw)\}_{i=1}^n$ as their corresponding values, in ascending order
    \label{alg:acc,l2}
    \FOR{$i$ from $1$ to $n$} \label{line:acc,3}
    \STATE Set the anchor data point $(\insx_i,\newy_i)$ in this round \label{line:acc,4}
    \LineComment{If $\xpos_i\!=\!j \in\!\{0,1\}$, in order to approximate $\min_{(\insx',y')\in S\setminus S_j} \newdist( (\xneg_i,\newy_i),(\xneg',\newy') )$}
    \STATE Compute the distances $\newdist( (\xneg_i,\newy_i) ,\cdot )$ for at most $m_2$ nearby data points that meets $\xpos\neq\xpos_i$ and $g(\insx,\newy;\vecw) \leqslant g(\insx_i,\newy_i;\vecw)$\label{line:acc,5}
    \STATE Find the minimum among them, recorded as $d_\text{min}^{s}$
    \STATE Compute the distances $\newdist( (\xneg_i,\newy_i) ,\cdot )$ for at most $m_2$ nearby data points that meets $\xpos\neq\xpos_i$ and $g(\insx,\newy;\vecw) \geqslant g(\insx_i,\newy_i;\vecw)$\label{line:acc,7}
    \STATE Find the minimum among them, recorded as $d_\text{min}^r$
    \STATE $d_\text{min}^{(i)}= \min\{d_\text{min}^s, d_\text{min}^r\}$ \label{line:acc,9}
    \ENDFOR \label{line:acc,10}
    \RETURN $\max\{d_\text{min}^{(i)} \mid i\in[n]\}$
\end{algorithmic}
\end{algorithm}

Then we could propose an approximation algorithm to estimate the distance between sets, named as ``\emph{\ppsalgfull{} (}\ppsalgabbr\emph{)}'', shown in Algorithm~\ref{alg:approx}. 
Note that there exists a sub-route within \ppsalgabbr{} to obtain an approximated distance between sets, which is named as ``\emph{\ppssubfull{} (}\ppssubabbr\emph{)}'' and shown in Algorithm~\ref{alg:accele}. 
The sub-route will be repeatedly executed several times with independently sampled random projection vectors, and this design is intended to reduce the chance of missing true nearest neighbours under an unfavourable projection direction. 
As the time complexity of sorting in line~\ref{alg:acc,l2} of Algorithm~\ref{alg:accele} could reach $\mathcal{O}(n\log n)$, we could get the computational complexity of Algorithm~\ref{alg:accele} as follows: 
i) The complexity of line~\ref{line:acc,1} is $\comp(n)$; and ii) The complexity from line~\ref{line:acc,4} to line~\ref{line:acc,9} is $\comp(2m_2+1)$. 
Thus the overall time complexity of Algorithm~\ref{alg:accele} would be $\comp(n(1+\log n+2m_2))$ and that of Algorithm~\ref{alg:approx} be $\comp(m_1n(\log n+m_2))$. 
As both $m_1$ and $m_2$ are the designated constants, the time complexity of computing the distance is reduced to $\comp(n\log n)$, which is more desirable than the $\comp(n_0n_1)$ complexity for the direct computation in Section~\ref{sec:method,dist}, which is upper bounded by $\comp(n^2)$.\looseness=-1

It is worth noting that in line \ref{alg:1,ln5} of Algorithm~\ref{alg:approx}, we use the minimum instead of their average value. 
The reason is that in each projection, the exact distance for one instance would not be larger than the calculated distance for it via \ppssubabbr{}; and the same observation holds for all of the projections in \ppsalgabbr{}. 
Thus, the calculated distance via \ppsalgabbr{} is always no less than the exact distance, and the minimal operator should be taken finally after multiple projections. 
It also needs to be restated that neither \ppsalgabbr{} nor \ppssubabbr{} calculate $\newDist_\cdot(S_0,S_1)$ in the same way as its definition in Eq.~\eqref{eq:5}, and they are supposed to provide its approximated value. The idea behind them is that: i) the distance between similar data points tends to be closer than others after them being projected onto a general one-dimensional linear subspace, with theoretical foundations presented in Section~\ref{subsec:method,analysis}; and ii) thus, by using the projections, we can accelerate the process in Eq.~\eqref{eq:5} by checking several adjacent instances rather than traversing the whole dataset.\looseness=-1

\subsection{Algorithmic effectiveness analysis of \ppsalgabbr}
\label{subsec:method,analysis}%

As \ppsalgabbr{} in Algorithm~\ref{alg:approx} is devised to facilitate the approximation of direct calculation of the distance between sets, in this subsection, we detail more about its algorithmic effectiveness under some conditions.

Before delving into the main result (Proposition~\ref{density}), we first introduce an important Lemma that confirms the observation that `the distance between similar data points tends to be closer than others after projecting them onto a general one-dimensional linear subspace'. 
Note that projecting to a certain fixed line $L$ in the Euclidean space is like choosing a fixed unit vector $\vecw$ (in the direction of $L$) and consider the inner product $\langle \vecw,\bm{v} \rangle$ for each $\bm{v}$, the number $\langle \vecw,\bm{v} \rangle$ means the length of the projected vector of $\bm{v}$ on $L$, for simplicity we denote it as $\mathrm{Proj}_L(\bm{v})$. We notice that $\langle \vecw,\bm{v} \rangle$ has a sign, which stands for the direction of the projected vector on the line $L$. The absolute value of $\langle \vecw,\bm{v} \rangle$ means the real distance.

\begin{lemma}\label{projection 1}
Let $\bm{v}_1$ (resp. $\bm{v}_2$) be a vector in the $n$-dimensional Euclidean space $\mathbb{R}^n$ with length $r_1$ (resp. $r_2$) such that $r_1\leqslant r_2$. 
Let $\vecw\in \mathbb{R}^n$ be a unit vector. 
We define $\mathbb{P}(\bm{v}_1,\bm{v}_2)$ as the probability that $|\langle \vecw,\bm{v}_1\rangle|\geqslant |\langle \vecw,\bm{v}_2\rangle|$. Then,
\begin{equation}\label{eq: ineq conclusion}
\topequation%
    \frac{\mathrm{sin}~\phi}{\pi} \cdot\frac{r_1}{r_2}
    \leqslant \mathbb{P}(\bm{v}_1,\bm{v}_2)\leqslant 
    \bigg(1+\frac{r_1^2}{r_2^2}\bigg)^{-\sfrac{1}{2}}
    \cdot\frac{r_1}{r_2} \,,
\end{equation} here $\phi$ represents the angle between $\bm{v}_1$ and $\bm{v}_2$.
\end{lemma}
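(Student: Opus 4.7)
The plan is to reduce the $n$-dimensional problem to a $2$-dimensional angular calculation and then squeeze the resulting probability from both sides with elementary trigonometric inequalities.

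First I would observe that $\langle\vecw,\bm{v}_1\rangle$ and $\langle\vecw,\bm{v}_2\rangle$ depend only on the orthogonal projection of $\vecw$ onto the plane $V\subset\mathbb{R}^n$ spanned by $\bm{v}_1,\bm{v}_2$; by the rotational invariance of the uniform measure on $S^{n-1}$, the direction of this projection is uniform on the unit circle in $V$. Working in an orthonormal basis of $V$ in which $\bm{v}_2=(r_2,0)$ and $\bm{v}_1=r_1(\cos\phi,\sin\phi)$, and writing $\vecw=(\cos\theta,\sin\theta)$ with $\theta\sim U[0,2\pi)$, the event $|\langle\vecw,\bm{v}_1\rangle|\ge|\langle\vecw,\bm{v}_2\rangle|$ becomes $r_1|\cos(\theta-\phi)|\ge r_2|\cos\theta|$. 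Squaring and applying $\cos^2 x=(1+\cos 2x)/2$ collapses this to a single sinusoid inequality $R\cos(2\theta-\delta)\ge r_2^2-r_1^2$, where $R\defineq\sqrt{r_1^4+r_2^4-2r_1^2r_2^2\cos 2\phi}$ and $\delta$ is a phase depending on $r_1,r_2,\phi$. The identity $R^2-(r_2^2-r_1^2)^2=4r_1^2r_2^2\sin^2\phi$ then guarantees $\gamma\defineq(r_2^2-r_1^2)/R\in[0,1]$, so a standard computation of the Lebesgue measure of those $2\theta$ for which $\cos(2\theta-\delta)\ge\gamma$ yields the closed form
\begin{equation*}
    \mathbb{P}(\bm{v}_1,\bm{v}_2)=\frac{\alpha}{\pi},\qquad \cos\alpha=\gamma,\qquad \sin\alpha=\frac{2r_1r_2\sin\phi}{R} \,,
\end{equation*}
with $\alpha\in[0,\pi/2]$.

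For the lower bound I would apply $\alpha\ge\sin\alpha$ and then use the easy estimate $R\le r_1^2+r_2^2\le 2r_2^2$ (which follows from $\cos 2\phi\ge-1$) to obtain $\alpha\ge 2r_1r_2\sin\phi/R\ge(r_1/r_2)\sin\phi$, which upon dividing by $\pi$ is precisely the left-hand inequality in \eqref{eq: ineq conclusion}. For the upper bound I would note that $\tan\alpha=2r_1r_2\sin\phi/(r_2^2-r_1^2)$ is monotone increasing in $\sin\phi$, so it suffices to treat the worst case $\phi=\pi/2$, where the double-angle identity for $\tan$ simplifies $\alpha$ to $2\arctan(r_1/r_2)$. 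Writing $\beta\defineq\arctan(r_1/r_2)\in[0,\pi/4]$, the target inequality reduces to $2\beta\le\pi\sin\beta$, i.e., $\sin\beta\ge(2/\pi)\beta$, a standard consequence of the concavity of $\sin$ on $[0,\pi/2]$.

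The main obstacle I anticipate is the upper bound: the naive estimate $\alpha\le(\pi/2)\sin\alpha$ alone is not sharp enough to recover the factor $(1+r_1^2/r_2^2)^{-1/2}$, so the reduction to the extremal configuration $\phi=\pi/2$ is essential. Secondary care is needed at the boundary cases $r_1=0$ (where the event has probability zero) and $r_1=r_2$ (where $\cos\alpha=0$ and $\alpha=\pi/2$), as well as for the degenerate situation in which $\bm{v}_1$ and $\bm{v}_2$ are parallel and the plane $V$ collapses to a line; all of these can be checked directly against the formula $\mathbb{P}=\alpha/\pi$.
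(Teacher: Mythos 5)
Your proof is correct, and its backbone coincides with the paper's: both arguments show that $\mathbb{P}(\bm{v}_1,\bm{v}_2)$ equals an explicit angle divided by $\pi$ --- your $\alpha$ is exactly the paper's $\theta$, the acute angle between the hyperplanes normal to $\bm{v}_2-\bm{v}_1$ and $\bm{v}_1+\bm{v}_2$, since your identity $\sin\alpha=2r_1r_2\sin\phi/R$ is precisely the paper's formula for $\sin^2\theta$ --- and both obtain the lower bound from $\theta\geqslant\sin\theta$ together with $R\leqslant r_1^2+r_2^2\leqslant 2r_2^2$. Two differences are worth noting. First, your reduction to the plane via rotational invariance and the explicit sinusoid computation is more careful than the paper's geometric double-wedge argument, which simply asserts $\mathbb{P}=\theta/\pi$; that is a small gain in rigour. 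Second, for the upper bound your anticipated obstacle is illusory: the ``naive'' estimate $\alpha\leqslant\frac{\pi}{2}\sin\alpha$, combined with the uniform-in-$\phi$ bound $\sin\alpha\leqslant 2r_1r_2/(r_1^2+r_2^2)$ (equivalent to $4r_1^2r_2^2\cos^2\phi\leqslant(r_1^2+r_2^2)^2\cos^2\phi$), already gives $\mathbb{P}\leqslant r_1r_2/(r_1^2+r_2^2)$, which is \emph{stronger} than the stated bound because $(1+r_1^2/r_2^2)^{-1}\leqslant(1+r_1^2/r_2^2)^{-1/2}$; this is exactly the paper's route. Your detour through the extremal case $\phi=\pi/2$ and the substitution $\beta=\arctan(r_1/r_2)$ is valid and reproduces the bound as literally stated, but it buys nothing beyond that. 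Finally, both proofs silently exclude the degenerate configuration $\bm{v}_1=\pm\bm{v}_2$, where the event is sure and the stated upper bound actually fails; you are right to flag it, but it cannot be ``checked against $\mathbb{P}=\alpha/\pi$'' --- it must simply be excluded from the hypothesis.
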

\begin{proof}
Notice that $|\langle \vecw,\bm{v}_1\rangle|\geqslant |\langle \vecw,\bm{v}_2\rangle|$ is equivalent to 
\begin{equation}
\topequation%
\label{equ: condition for projection}
    \langle \bm{v}_2-\bm{v}_1,\vecw\rangle \langle \bm{v}_1+\bm{v}_2,\vecw\rangle\leqslant 0 \,.
\end{equation}
If $\vecw$ satisfies Eq.~\eqref{equ: condition for projection}, then it lies between two hyperplanes that are perpendicular to $\bm{v}_1-\bm{v}_2$ and $\bm{v}_1+\bm{v}_2$ respectively. Denote by $\theta$ the angle between these two hyperplanes (which is equal to the acute angle between $\bm{v}_2-\bm{v}_1$ and $\bm{v}_1+\bm{v}_2$), and  $0\leqslant \theta\leqslant \frac{\pi}{2}$, then $\mathbb{P}(\bm{v}_1,\bm{v}_2)=\frac{\theta}{\pi}$.\footnote{%
Since the event only depends on the two inner products with $\bm v_2-\bm v_1$ and $\bm v_1+\bm v_2$, by rotational invariance, it suffices to consider the two-dimensional plane spanned by these two vectors. In this plane, the set of directions satisfying the above inequality has total angle $2\theta$ out of $2\pi$.
} Moreover, 
\begin{equation}
\topequation
\label{eq: ineq cor1 1}
    \mathrm{sin}^2~\theta=1-\mathrm{cos}^2~\theta=\frac{4\lVert \bm{v}_1\rVert^2\lVert \bm{v}_2\rVert^2-4\langle \bm{v}_1,\bm{v}_2\rangle^2}{(\lVert \bm{v}_1\rVert^2+\lVert \bm{v}_2\rVert^2)^2-4\langle \bm{v}_1,\bm{v}_2 \rangle^2} \,.
\end{equation}
Here $\lVert \bm{v}_i\rVert^2=\langle \bm{v}_i,\bm{v}_i\rangle=r_i^2$ ($i=1,2$) is the square length of the vector. Recall that $\langle \bm{v}_1,\bm{v}_2\rangle=\lVert \bm{v}_1\rVert \lVert \bm{v}_2\rVert \mathrm{cos}~\phi$. By Eq.~\eqref{eq: ineq cor1 1}, we have\footnote{%
Note that the left hand side is obtained based on $(r_1^2+r_2^2)^2-4r_1^2r_2^2\cos^2\phi \leqslant (r_1^2+r_2^2)^2\leqslant 4r_2^4$, and the right hand side is obtained based on $(r_1^2+r_2^2)^2\sin^2\phi \leqslant (r_1^2+r_2^2)^2-4r_1^2r_2^2 \cos^2\phi$, that is, $0\leqslant ((r_1^2+r_2^2)^2-4r_1^2r_2^2)\cos^2\phi =(r_2^2-r_1^2)^2\cos^2\phi$.
}%
\begin{equation}
\topequation
\label{eq: ineq cor1 2}
    \frac{\lVert \bm{v}_1\rVert^2}{\lVert \bm{v}_2\rVert^2}\mathrm{sin}^2~\phi\leqslant \mathrm{sin}^2~\theta\leqslant\frac{4\lVert \bm{v}_1\rVert^2\lVert \bm{v}_2\rVert^2}{(\lVert \bm{v}_1\rVert^2+\lVert \bm{v}_2\rVert^2)^2} \,.
\end{equation}
Combining Eq.~\eqref{eq: ineq cor1 2} with the fact that $\frac{2}{\pi}\theta\leqslant\mathrm{sin}~\theta\leqslant \theta$, we conclude that the probability $\mathbb{P}(\bm{v}_1,\bm{v}_2)=\frac{\theta}{\pi}$ satisfies the desired inequalities. 
Note that by Eq.~\eqref{eq: ineq conclusion}, when the ratio $r_1/r_2$ goes to zero, the probability $\mathbb{P}(\bm{v}_1,\bm{v}_2)$ goes to zero; Also, when $r_1=r_2$ and $\bm{v}_1 \neq\pm \bm{v}_2$, it is easy to calculate that $\mathbb{P}(\bm{v}_1,\bm{v}_2)=\frac{1}{2}$, while the cases $\bm v_1=\pm\bm v_2$ are degenerate.
\end{proof}

\begin{figure}[tb]%
\centering
\includegraphics[width=5.3cm]{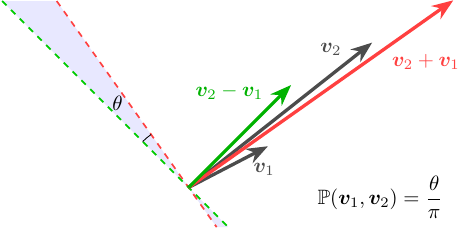}%
\vspace{-2mm}
\caption{Illustration of Lemma~\ref{projection 1}. 
The dashed lines represent the hyperplanes that are perpendicular to $(\bm{v}_1-\bm{v}_2)$ and $(\bm{v}_1+\bm{v}_2)$ respectively, and $\theta$ denotes the acute angle between the hyperplanes.}\label{lem 1}
\end{figure}

To understand Lemma~\ref{projection 1}, suppose that we have two vectors $\bm{v}_i$ with length $r_i$. We can choose a random line $L_\text{random}$ of the Euclidean space, and we consider the projection to $L_\text{random}$. We know that for some $L_\text{random}$, it may happen that after projection, the length of $\bm{v}_1$ is larger than the length of $\bm{v}_2$, and for the rest $L_\text{random}$, the opposite relation holds. Now we suppose that $r_1<r_2$ (actually in application, we may assume that $r_1 \ll r_2$ in most cases), and we randomly choose a line $L$ (to do the projection), what is the probability that, after projecting onto $L$, we still have length $\mathrm{Proj}_L(\bm{v}_1) < \mathrm{Proj}_L(\bm{v}_2)$? 
\emph{The lemma says that the probability of reversing the original size relationship of the lengths of vectors is at most of order $\frac{r_1}{r_2}$, and hence the probability of preserving the original size relationship is at least $1-\mathcal{O}(\frac{r_1}{r_2})$.} 
In particular, if $\frac{r_1}{r_2}$ goes to zero (the length of $\bm{v}_2$ is much larger than the length of $\bm{v}_1$), then the probability that $\mathrm{Proj}_L(\bm{v}_1) < \mathrm{Proj}_L(\bm{v}_2)$ goes to one. Now we take three data points $\bm{p}_1,\bm{p}_2,\bm{p}_3$, and we may consider the distance vector $\bm{v}_1=\bm{p}_2-\bm{p}_1$ and $\bm{v}_2=\bm{p}_3-\bm{p}_1$, the length of these vectors corresponds to the distance of corresponding data points. We may apply the previous lemma to $\bm{v}_1$ and $\bm{v}_2$ to estimate the probability that after taking a projection to a random line, the projection keeps the size relationship of the distances between the given points (if the distance between $\bm{p}_1$ and $\bm{p}_2$ is less than the distance of $\bm{p}_1$ and $\bm{p}_3$, then with the probability estimated above and after projecting to the line, the three projected points $\bm{q}_1,\bm{q}_2, \bm{q}_3$ satisfy that the distance of $\bm{q}_1$ and $\bm{q}_2$ is less than the distance of $\bm{q}_1$ and $\bm{q}_3$). This explains why ``the distance between similar data points tends to be closer than others after projecting them onto a general one-dimensional linear subspace'' in our observation. 
Furthermore, to compute the distance between two sets of points usually involves a lot of computation of distances of points and comparing the size relationship of lengths. To avoid the huge computation, we can project the data points to random lines and sort the projected points on this line. As the previous lemma suggests that, for a fixed comparison of two distances from the same anchor point, a random projection is unlikely to reverse the order when the shorter distance is much smaller than the longer one, and since such distance comparisons are likely to be preserved under a random projection in the above sense, we mainly need to evaluate the distances of data points that are nearby after projection, which will save a huge amount of computation.

\begin{proposition}\label{density}
Let $S= \{(\bm{x}_i,y_i)\}_{i=1}^n\subset \mathcal{X}\times \mathcal{Y}$ be a $(k+1)$-dimensional (instances have $k+1$ features) dataset, evenly distributed dataset with a size of $n$ that is a random draw of the training instances. For any two subsets of $S$ with distance $d$ (ref. Eq.~\eqref{eq:5}), suppose further that the scaled density    \begin{equation}\label{eq: scaled density}
\topequation%
\sup_r~\bigg\{
\inf_{\mathbf{B} \text{ ball of radius $r$}}\frac{1}{\mathrm{Vol}(\mathbf{B})}\#(\mathbf{B}\cap S)
\bigg\}= \frac{\mu}{\mathrm{Vol}(\mathbf{B}(d))}\,,
\end{equation}
for some positive real number $\mu$ (here $\#$ denotes the number of points of a finite set and $\mathbf{B}(d)$ denotes a ball of radius $d$). Then, with probability at least \begin{equation}\label{eq: probability}
\topequation%
1-\bigg(\frac{\pi\mu}{m_2\mathrm{Vol}(\mathbf{B}(1))} 
\Big(\Big(1+\frac{n}{\mu}\Big)^{\frac{1}{k+1}}-\alpha\Big)\bigg)^{m_1}\,,
\end{equation}
\ppsalgabbr{} could reach an approximate solution that is at most $\alpha$ times of the distance between these two subsets.
\end{proposition}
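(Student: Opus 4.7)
Since Algorithm~\ref{alg:approx} returns $\min_j d_\text{max}^j$ over $m_1$ independently drawn random projections $\vecw$, if each trial produces an estimate larger than $\alpha d$ with probability at most $p$, then the final failure probability is at most $p^{m_1}$, matching the exponentiated form of~\eqref{eq: probability}. It therefore suffices to establish
\[
    p \,\leqslant\, \frac{\pi\mu}{m_2\,\mathrm{Vol}(\mathbf{B}(1))}\bigl((1+n/\mu)^{1/(k+1)}-\alpha\bigr).
\]

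Fixing a single projection $\vecw$, I would proceed anchor by anchor. For an anchor $(\bm{x}_i,\newy_i)$, let $q_i$ denote its true nearest neighbour from the opposite sensitive group, so by hypothesis $\|q_i-\bm{x}_i\|\leqslant d$. The per-anchor output $d_\text{min}^{(i)}$ falls within $\alpha d$ as soon as $q_i$ is captured among the $m_2$ projection-nearest opposite-group points on its side of $\bm{x}_i$ in the sorted list of line~\ref{alg:acc,l2}. Hence the bad event at anchor $i$ implies that at least $m_2$ opposite-group points $r$ with true distance $\|r-\bm{x}_i\|>\alpha d$ interpose, i.e., satisfy $|\langle\vecw,r-\bm{x}_i\rangle|\leqslant|\langle\vecw,q_i-\bm{x}_i\rangle|$. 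Applying Lemma~\ref{projection 1} with $\bm{v}_1=q_i-\bm{x}_i$ and $\bm{v}_2=r-\bm{x}_i$ bounds the probability of each such individual interposition by roughly $\|q_i-\bm{x}_i\|/\|r-\bm{x}_i\|\leqslant d/\|r-\bm{x}_i\|$.

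The core step is then to convert these per-point bounds into the stated $p$ using the scaled density hypothesis~\eqref{eq: scaled density}. Under the evenly-distributed hypothesis, opposite-group points behave like a uniform cloud of density $\rho=\mu/\mathrm{Vol}(\mathbf{B}(d))$ supported in a ball of radius $R_\text{max}$ around each anchor; the volume identity $\rho\cdot\mathrm{Vol}(\mathbf{B}(R_\text{max}))=n+\mu$ yields $R_\text{max}/d=(1+n/\mu)^{1/(k+1)}$, which explains the leading term of the bound, while the subtracted $\alpha$ reflects the inner ball of radius $\alpha d$ inside which any captured point already guarantees success. Markov's inequality at threshold $m_2$ converts the resulting expected interposing-count into a probability bound, and the factor $\pi^{-1}$ is recovered via an angular refinement of Lemma~\ref{projection 1} that exploits the $\sin\phi/\pi$ form of its lower bound rather than the gross ratio $r_1/r_2$. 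Exponentiating by $m_1$ closes the argument.

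The main obstacle will be the density-to-integral translation and the delicate bookkeeping of constants. Specifically, the radial aggregation of per-point bounds must collapse to a linear---rather than polynomial---expression in $R_\text{max}/d$, uniformly across the ambient dimension $k+1$; this will require leveraging the precise angle-dependent form in Lemma~\ref{projection 1} instead of only its gross upper bound. A secondary subtlety is aggregation across anchors: since $d_\text{max}^j$ is a maximum, one must argue that the per-anchor bound is strong enough to absorb the implicit union over the $n$ anchors without incurring an extra linear factor of $n$---most plausibly by applying the density hypothesis at a single worst-case anchor rather than via a naive union bound.
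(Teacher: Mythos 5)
Your plan follows the same route as the paper's proof: reduce to a single random projection and exponentiate over the $m_1$ independent trials, analyse only the worst-case anchor $\bm{v}_0$ realising $d=\newDist(S_0,S_1)$ (no union bound over the $n$ anchors is attempted --- the paper does exactly what you guess at the end), bound each ``interposition'' event via Lemma~\ref{projection 1}, pass from the expected number of interposers to a failure probability by Markov's inequality at threshold $m_2$, and extremise over configurations consistent with the density hypothesis~\eqref{eq: scaled density}, which places the opposite-group points uniformly in a hollow ball whose outer radius $r_0$ satisfies $n-1=\mu\big((r_0/d)^{k+1}-1\big)$ --- precisely your volume identity for $R_{\max}/d$.

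The step you flag as the ``main obstacle'' is, however, a genuine gap, and the fix you propose points the wrong way. To make the radial aggregation collapse to an expression linear in $r_0/d$, the paper does not refine Lemma~\ref{projection 1} via its $\sin\phi/\pi$ lower bound: it keeps the \emph{upper} bound $(1+r_1^2/r_2^2)^{-1/2}\,r_1/r_2$ for each interposer and multiplies by the ratio of sphere areas $\mathrm{Vol}(S^1(\lVert\bm{v}\rVert/d))/\mathrm{Vol}(S^{k}(\lVert\bm{v}\rVert/d))=\frac{\Gamma(\frac{k+1}{2})}{\pi^{(k-1)/2}}\,(d/\lVert\bm{v}\rVert)^{k-1}$, a dimensional correction reflecting that Lemma~\ref{projection 1} is effectively a planar statement while the points are spread over a $(k+1)$-dimensional region. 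It is this $(d/\lVert\bm{v}\rVert)^{k-1}$ factor that cancels the $\lVert\bm{v}\rVert^{k}$ growth of the spherical shells, leaving an integrand proportional to $\frac{\pi\mu}{d}\cdot\frac{x}{\sqrt{x^2+d^2}}$ whose integral from $\alpha d$ to $r_0$ yields the linear term $(1+n/\mu)^{1/(k+1)}-\alpha$; the $\pi$ in Eq.~\eqref{eq: probability} thus comes from the sphere-area formulas and the normalisation by $\mathrm{Vol}(\mathbf{B}(1))$, not from the lower bound of Lemma~\ref{projection 1}. Using that lower bound, as you suggest, would bound each interposition probability from \emph{below}, which is the wrong direction for upper-bounding the failure probability; without the dimensional correction your radial sum remains polynomial in $R_{\max}/d$ and the stated bound does not follow.
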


\begin{proof}
Let $S_0$ and $S_1$ be two sub-datasets of $S$. We fix the instance $\bm{v}_0\in S_0$ such that $d\defineq \mathbf{D}(S_0,S_1)=\mathbf{d}(\bm{v}_0,\bm{v_1})$ for some $\bm{v_1}\in S_1$. 
For simplicity, we may set $\bm{v}_0$ as the origin. The probability that an instance $\bm{v}\in S_1$ has a shorter length than $\bm{v}_1$ after projection to a line (see Eq.~\eqref{eq:4}) is denoted as $\mathbb{P}(\bm{v}_1,\bm{v})$. By assumption, we only need to consider those instances whose length is greater than $\alpha d$ (outside the ball $\mathbf{B}(\alpha d)$ centered at the origin), that is, $\{\bm{v} \mid \bm{v}\notin \mathbf{B}(\alpha d) \}$. 
Then the probability that \ppssubabbr{} fails to include such a candidate is bounded above by $\frac{1}{m_2} \sum_{\bm{v}\notin \mathbf{B}(\alpha d)} \probP(\bm{v}_1, \bm{v})$; the probability that we could not find an approximate solution by \ppsalgabbr{} is 
$( \tfrac{1}{m_2} \sum_{\bm{v}\notin \mathbf{B}(\alpha d)} \probP(\bm{v}_1, \bm{v}) )^{m_1}
.$ %
Hence, the desired probability is bounded from below by
\begin{equation}\label{eq: probability 1}
\topequation%
    1-\bigg(\frac{1}{m_2} \sum_{\bm{v}\notin \mathbf{B}(\alpha d)} \mathbb{P}(\bm{v}_1,\bm{v})\bigg)^{m_1}\,.
\end{equation}

However, Eq.~\eqref{eq: probability 1} is based on the extreme assumption that all instances lie on the same two-dimensional plane. In our case, the instances are evenly distributed. Hence, we may adjust the probability by multiplying 
\begin{equation}
\topequation\nonumber
\frac{\volume\big(S^1(\frac{\lVert \bm{v}\rVert}{d})\big)}{\volume\big(S^k(\frac{\lVert \bm{v}\rVert}{d})\big)}=
\frac{\Gamma(\frac{k+1}{2})}{\pi^{\frac{k-1}{2}}} \cdot \Big( \frac{d}{\lVert \bm{v}\rVert}\Big)^{k-1}
\,,\end{equation}%
here $\Gamma(\cdot)$ denotes the Gamma function and $\volume(S^i(r))$ denotes the area of the $i$-th dimensional sphere of radius $r$. 
Note that $\volume(S^i(r))= \sfrac{ 2\pi^{\frac{i+1}{2}} \, r^{i} }{ \Gamma(\frac{i+1}{2}) }$.
Hence, by Lemma~\ref{projection 1}, the desired probability is lower bounded by 
\begin{equation}\label{eq: probability 11}
\topequation%
    1-\bigg(\frac{1}{m_2} \sum_{\bm{v}\notin \mathbf{B}(\alpha d)} \bigg(1+\frac{d^2}{\lVert \bm{v}\rVert^2}\bigg)^{-\frac{1}{2}} \frac{\Gamma(\frac{k+1}{2})}{\pi^{\frac{k-1}{2}}}
    \cdot\bigg(\frac{d}{\lVert \bm{v}\rVert}\bigg)^k\bigg)^{m_1}\,.
\end{equation}
Under our assumption, Eq.~\eqref{eq: probability 11} attains the lowest value when the data are evenly distributed inside a hollow ball $\mathbf{B}_0\setminus \mathbf{B}(d)$ centered at $\bm{v}_0$. The radius of $\mathbf{B}_0$, denoted as $r_0$, satisfies 
\begin{equation}\label{eq: radius}
\topequation%
    n-1=\mu\frac{\volball(\ball_0\setminus \ball(d))}{\volball(\ball(d))}=\mu\Big(\Big(\frac{r_0}{d}\Big)^{k+1}-1\Big)\,.
\end{equation}
Note that $\volball(\ball(d))$ denotes the volume of a ball of radius $d$, and, for a $i$-th dimensional ball, $\volball(\ball^i(d))\!= \sfrac{ \pi^\frac{i}{2} d^{i} }{ \Gamma(\frac{i}{2}+1) }$. 
Also note their relationship is $\volume(S^k(r))= \frac{d}{dr} \volball(\ball^{i+1}(r))$.

In this situation, we may write the summation part of Eq.~\eqref{eq: probability 11} as an integration. To be more specific, Eq.~\eqref{eq: probability 11} is lower bounded by
\begin{equation}\label{eq: probability 22}
\topequation%
    1-\bigg(\frac{1}{m_2} \int_{\alpha d}^{r_0} A(x)\mu \,\volume(S^{k}(x))dx \bigg)^{m_1}\,.
\end{equation}
where $A(x)=(1+\frac{d^2}{x^2})^{-\frac{1}{2}} \frac{\Gamma(\frac{k+1}{2})}{\pi^{(k-1)/2}} \cdot(\frac{d}{x})^k$. 
Moreover, Eq.~\eqref{eq: probability 22} can be simplified as 
\begin{equation}\label{eq: probability 33}
\topequation%
    1-\bigg(\frac{1}{m_2\volball(\ball(1))} \int_{\alpha d}^{r_0} \frac{\pi\mu}{d}\cdot \frac{x}{\sqrt{x^2+d^2}} dx \bigg)^{m_1}\,.
\end{equation}
Combining Eq.~\eqref{eq: radius} and \eqref{eq: probability 33}, we conclude that the desired probability is lower bounded by 
\begin{equation}\label{eq: probability 44}
\topequation%
    1-\bigg(\frac{\pi\mu}{m_2\mathrm{Vol}(\mathbf{B}(1))} 
    \bigg( \Big(\Big(1+\frac{n}{\mu}\Big)^{2/(k+1)}+1\Big)^{\frac{1}{2}}-(\alpha^2+1)^{\frac{1}{2}}\bigg)\bigg)^{m_1}\,.
\end{equation}
And the proposition follows from Eq.~\eqref{eq: probability 44}.
\end{proof}

The analysis in Proposition~\ref{density} relies on a uniform ball assumption, which is admittedly strong and not directly verifiable for real-world data. This assumption is introduced to enable a tractable analysis and to derive a non-trivial lower bound on the probability that \ppsalgabbr{} includes true nearest neighbours in the candidate set. Intuitively, the uniform ball condition ensures that each local neighbourhood has sufficient mass in all directions, allowing random projections to capture nearby points with bounded probability, while the scaled density parameter $\mu$ abstracts the worst-case local sparsity relative to the global scale.
When these assumptions are violated (for example, in the presence of dense clusters, anisotropic structures, or highly uneven data densities), the resulting bound may become loose or even vacuous, as the underlying geometric argument no longer applies. Importantly, this does not imply that \ppsalgabbr{} fails in practice. From an algorithmic perspective, violations of the uniform ball assumption primarily affect the tightness of the probabilistic bound, rather than the correctness of the approximation procedure itself, since \ppsalgabbr{} always produces an upper bound of the true set distance by construction. In fact, under clustered or non-uniform distributions, random projections often preserve local neighbour relations well, making the approximate candidate sets likely to contain true nearest neighbours with high empirical probability. Since clustered structures increase local point density, the probability that true nearest neighbours appear among nearby projected candidates is typically higher than the worst-case scenario captured by the uniform analysis. Moreover, our experiments on non-uniform and clustered datasets (see Sections~\ref{subsec:RQ2} to \ref{subsec:RQ3}) confirm that the approximation quality remains high beyond the scope of the current theoretical guarantees.
Extending the analysis to relax these assumptions using data-dependent geometric measures, such as intrinsic dimension or doubling dimension, is an interesting direction for future work.

Moreover, note that by Eq.~\eqref{eq: probability}, the efficiency of \ppsalgabbr{} decreases as the scaled density $\frac{\mu}{\volume(\ball(d))}$ of the original dataset increases. Meanwhile, when dealing with a large-scale dataset, the more insensitive attributes we have, the more efficient \ppsalgabbr{} is. In general, the efficiency of \ppsalgabbr{} depends on the shape of these two subsets of $S$. Roughly speaking, the more these two sets are separated from each other, the more efficient \ppsalgabbr{} is.

Now we discuss our choice of $m_1$ and $m_2$ according to Eq.~\eqref{eq: probability}. In fact, Eq.~\eqref{eq: probability} can be approximately written as $1-c\cdot n^{\frac{m_1}{k+1}}/m_2^{m_1}$. We can calculate the order of magnitude of $n^{\frac{m_1}{k+1}}/m_2^{m_1}$ by taking the logarithm: 
\begin{equation}\label{eq: magnitude}
\topequation%
-\lambda\defineq\mathrm{lg} \Big(n^{\frac{m_1}{k+1}}/m_2^{m_1}\Big)
=m_1\Big(\tfrac{\mathrm{lg}~n}{k+1}-\mathrm{lg}~m_2\Big)\,.
\end{equation}
The \ppsalgabbr{} could reach an approximate solution with probability at least $1-c\cdot 10^{-\lambda}$. In practice, we choose positive integers $m_2$ and $m_1$ such that $\lambda$ is reasonably large, ensuring that the algorithm will reach an approximate solution with high probability. 
That is to say, a fine-grained tuning of hyperparameters ($m_1$ or $m_2$) is not required, while they may grow moderately with the dataset size.

\section{Empirical Results}
In this section, we elaborate on our experiments to evaluate the effectiveness of the proposed \ppsdisabbr{} in Eq.~\eqref{eq:3} and \ppsalgabbr{} in Algorithm~\ref{alg:approx}. 
These experiments are conducted to explore the following research questions: 
\textbf{RQ1}. Compared with the prevailing baseline fairness measures, does the proposed \ppsdisabbr{} capture the discriminative degree of one classifier effectively, and can it capture the discrimination level from both individual and group fairness aspects? 
\textbf{RQ2}. Can \ppsalgabbr{} approximate the direct computation of distances in Eq.~\eqref{eq:5} precisely, and how efficient is \ppsalgabbr{} compared with the direct computation of distances? 
\textbf{RQ3}. Will the choice of hyperparameters (that is, $m_1$ and $m_2$ in \ppsalgabbr) affect the approximation results, and if the answer is yes, how?

\subsection{Experimental setups}
\label{subsec:setup}
In this subsection, we present the experimental setting, including datasets, evaluation metrics, baseline fairness measures, and implementation details.

\begin{table}[t]
\centering
\caption{Dataset statistics. 
The column `\#inst' represents the number of instances. 
}\label{tab:stats}
\vspace{-.7em}
\scalebox{.89}{%
\begin{tabular}{r |r|r|r |r|r}
    \toprule
    \multirow{2}{*}{\bf Datasets} & 
    \multirow{2}{*}{\bf \#inst} & 
    \multicolumn{2}{c|}{\bf \#feature} & 
    \multicolumn{2}{c}{\bf \#member in the privileged group} \\
    \cline{3-6}
    & & raw & binarized & 1st priv~~ & 2nd priv~~ \\
    \midrule
    ricci &   118 &  5 &   6 & 
    68 in race & --- \\
    credit&  1000 & 21 &  59 & 
    690 in sex & 851 in age \\
    income& 30162 & 14 &  99 & 
    25933 in race & 20380 in sex \\
    ppr   &  6167 & 11 & 402 & 
    4994 in sex & 2100 in race \\
    ppvr  &  4010 & 11 & 328 & 
    3173 in sex & 1452 in race \\
    \bottomrule
\end{tabular}
}
\end{table}

\paragraph{Datasets} 

Five public datasets were adopted in the experiments: 
Ricci,\footnote{\url{https://rdrr.io/cran/Stat2Data/man/Ricci.html}} 
Credit,\footnote{\url{https://archive.ics.uci.edu/dataset/144/statlog+german+credit+data}} 
Income (\aka{} Adult),\footnote{\url{https://archive.ics.uci.edu/dataset/2/adult}} 
COMPAS PPR, and COMPAS PPVR.\footnote{\url{https://github.com/propublica/compas-analysis/}, that is,\\Propublica-Recidivism and Propublica-Violent-Recidivism datasets} 
Each of them has two sensitive attributes except Ricci, with more statistical details provided in Table~\ref{tab:stats}, and is commonly used in the literature \citep[Table 2]{pessach2022review}. 
Among these datasets, Ricci was used in a court case dealing with racial discrimination; COMPAS PPR and PPVR were relevant to racial discrimination in criminal risk assessments.

\paragraph{Evaluation metrics} 
As data imbalance usually exists within unfair datasets, we consider several criteria to evaluate the prediction performance, including accuracy, precision, recall (\aka{} sensitivity), $\mathrm{f}_1$ score, and specificity. 
For efficiency metrics, we directly compare the time cost of different methods.\looseness=-1

\begin{figure*}
\centering%
\subfloat[]{\label{subfig:comp,a}
\includegraphics[height=28mm]{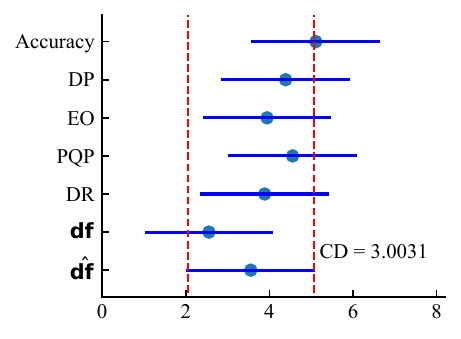}\vspace{2mm}}
\subfloat[]{\label{subfig:comp,b}
\includegraphics[height=31mm]{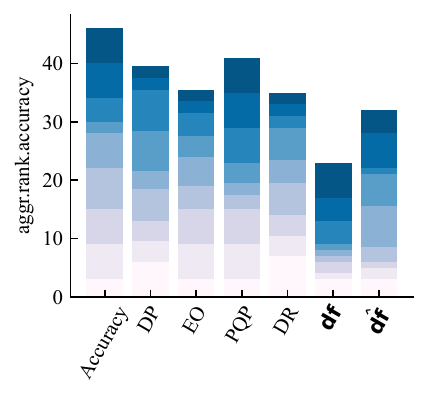}}
\subfloat[]{\label{subfig:comp,alt,a}
\includegraphics[height=28mm]{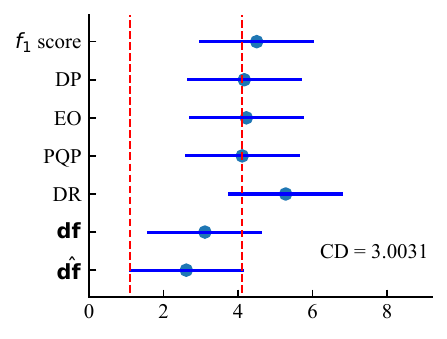}\vspace{2mm}}
\subfloat[]{\label{subfig:comp,alt,b}
\includegraphics[height=31mm]{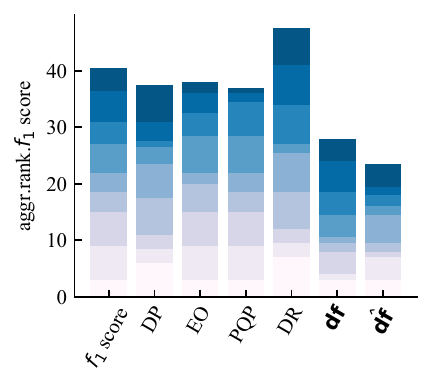}}
\\ \vspace{-3mm}%
\subfloat[]{\label{subfig:comp,d}
\includegraphics[height=31mm]{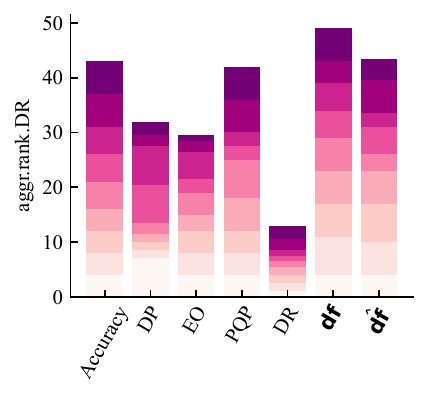}}
\hspace{1mm}
\subfloat[]{\label{subfig:fair,k0}
\includegraphics[width=33mm]{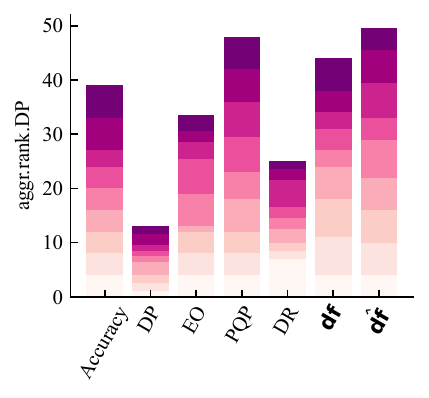}}
\hspace{1mm}
\subfloat[]{\label{subfig:fair,k1}
\includegraphics[width=33mm]{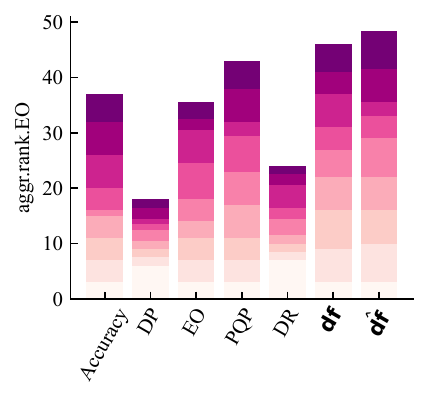}}
\hspace{1mm}
\subfloat[]{\label{subfig:fair,k2}
\includegraphics[width=33mm]{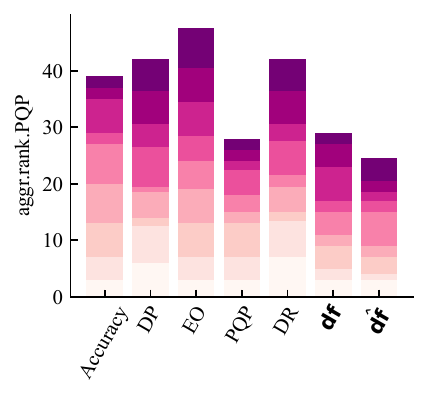}}
\\ \vspace{-4mm}%
\subfloat[]{\label{subfig:comp,alt,d}
\includegraphics[height=31mm]{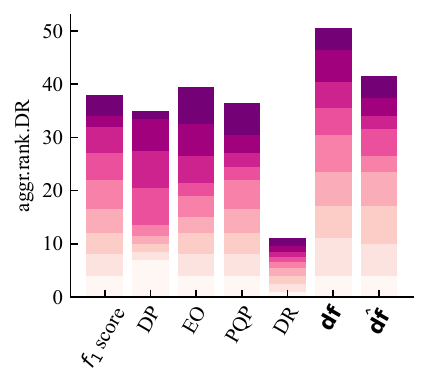}}
\hspace{1mm}
\subfloat[]{\label{subfig:falt,k0}
\includegraphics[width=33mm]{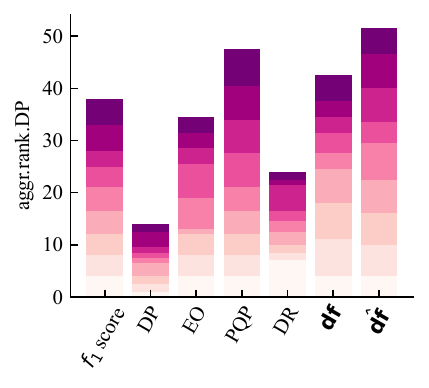}}
\hspace{1mm}
\subfloat[]{\label{subfig:falt,k1}
\includegraphics[width=33mm]{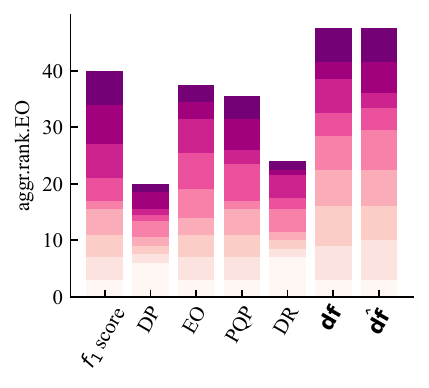}}
\hspace{1mm}
\subfloat[]{\label{subfig:falt,k2}
\includegraphics[width=33mm]{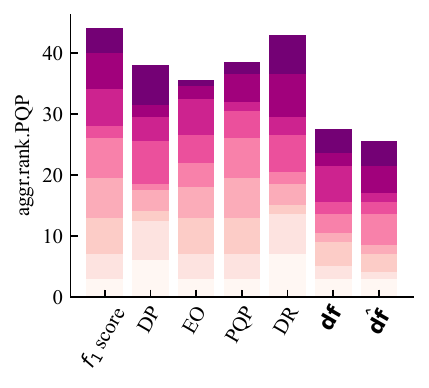}}
\vspace{-2mm}\caption{%
Comparison of baseline fairness measures and the proposed \ppsdisabbr{}. 
(a) Friedman test chart (non-overlapping means significant difference) on the test accuracy, which rejects the null hypothesis that `all fairness-relevant rules have the same evaluation performance' at the significance level of 5\%, 
and where CD means the critical difference of average rank difference, calculated by Nemenyi post-hoc test \cite{zhou2021machine}; (b) The aggregated rank of each fairness-relevant rule (the smaller the better) \cite{qian2015pareto} on the test accuracy; 
(c--d) Friedman test chart and the aggregated rank of each fairness-relevant rule on the $\mathrm{f}_1$ score, evaluated on test data. 
(e--h) The aggregated rank of each fairness-relevant rule on the DR and three group fairness measures, respectively; 
(i--l) The aggregated rank of each fairness-relevant rule on the DR, DP, EO, and PQP, respectively. 
Note that the ranking rule in (a--b) and (e--h) is $0.05\times\text{error rate}+0.95\times\text{fairness}$, and that in (c--d) and (i--l) is $0.1\times(1-\mathrm{f}_1\text{ score})+0.9\times\text{fairness}$. 
Also note that in all of them except (a) and (c), the smaller (in the $y$-axis), the better. 
}\label{fig:comp}
\end{figure*}

\begin{figure*}
\begin{minipage}{\textwidth}
\centering
\includegraphics[width=15cm]{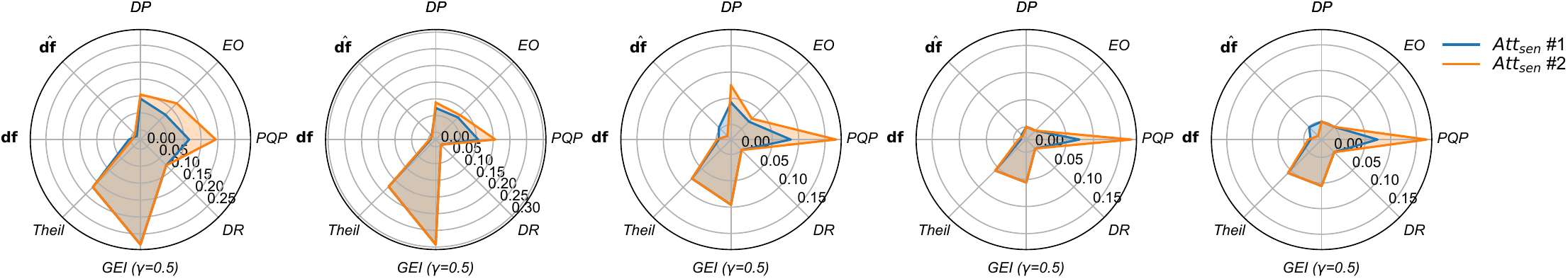}
\includegraphics[width=15cm]{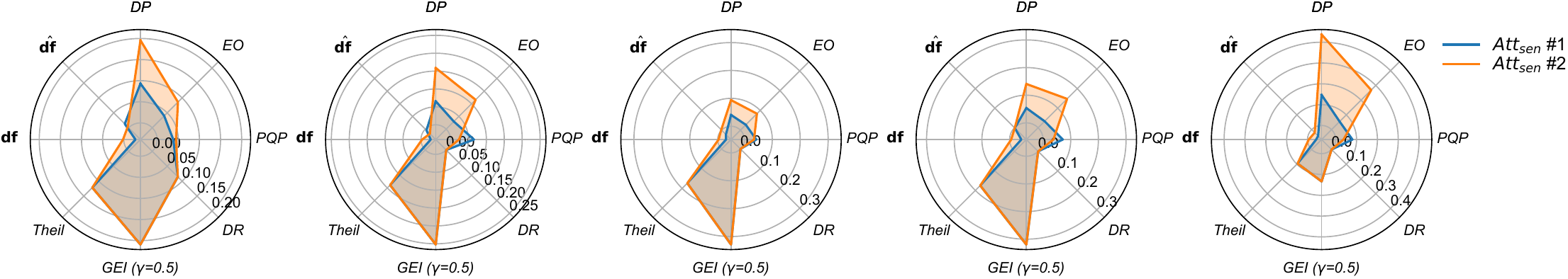}
\includegraphics[width=15cm]{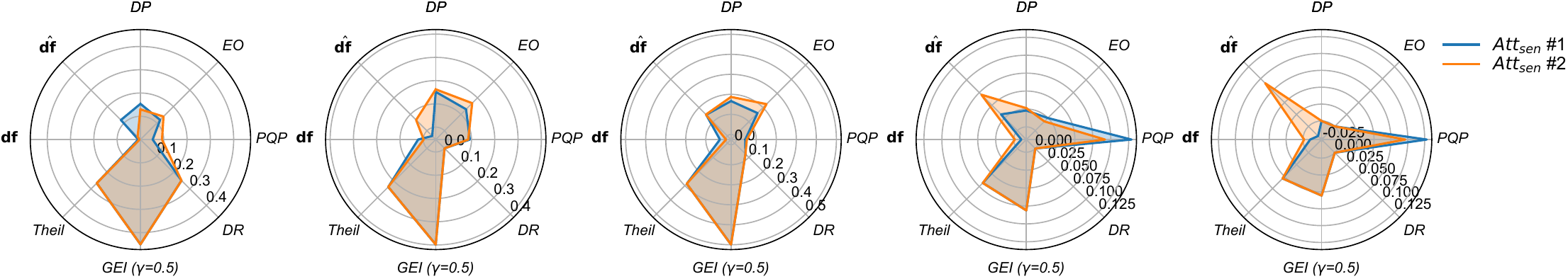}
\includegraphics[width=15cm]{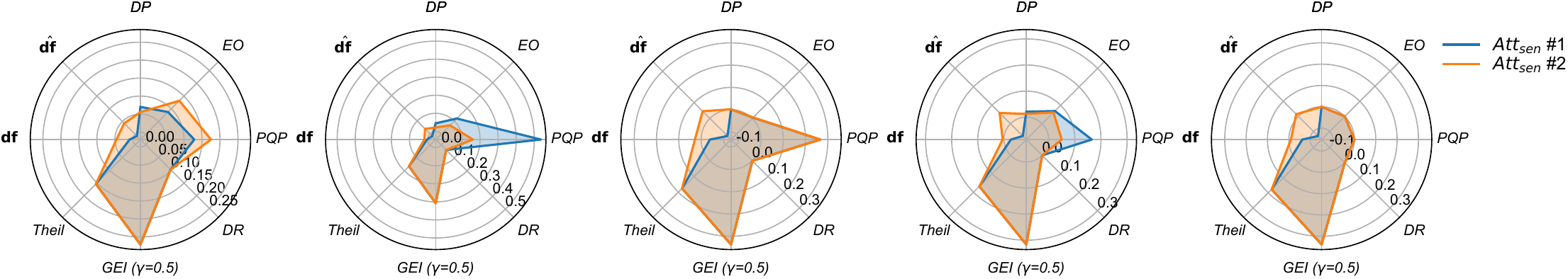}
\vspace{-4mm}\caption{
Comparison of baseline fairness measures and the proposed \ppsdisabbr{}, where $\text{Att}_\text{sen}$ \#i' indicates the unfairness degree of the corresponding algorithm regarding $a_i$, that is, the $i$-th sensitive attribute in the dataset. 
Four rows correspond to results on the Credit, Income, PPR, and PPVR datasets, respectively; Five columns correspond to results using bagging, AdaBoost, LightGBM, AdaFair (mitigating bias from the first sensitive attribute), and AdaFair (mitigating bias from the second sensitive attribute), respectively. 
Also note that $\gamma=0.5$ in the GEI computation.
}\label{fig:radar}
\end{minipage}
\begin{minipage}{\textwidth}
\vspace{-2mm}
\centering
\subfloat[]{\includegraphics[height=24mm]{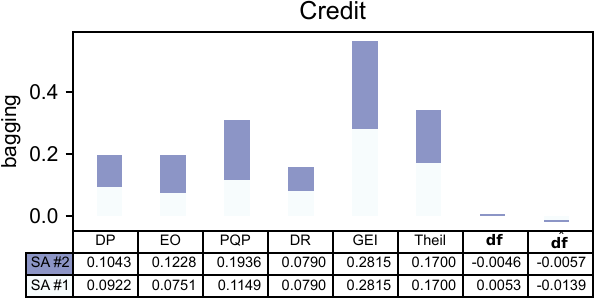}}
\hspace{1mm}
\subfloat[]{\includegraphics[height=24mm]{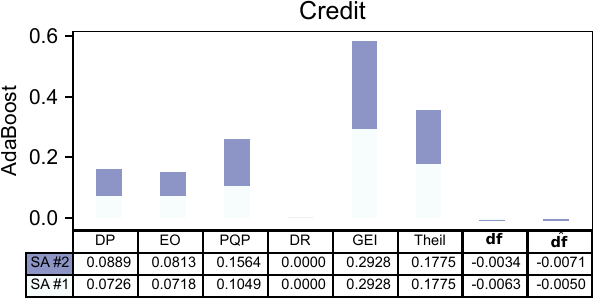}}
\hspace{1mm}
\subfloat[]{\includegraphics[height=24mm]{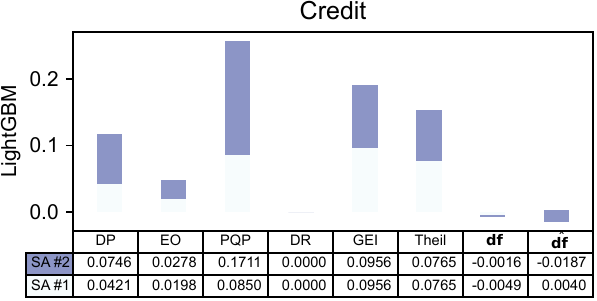}}
\\ \vspace{-4.5mm}
\subfloat[]{\label{subfig:tab4}%
\includegraphics[height=24mm]{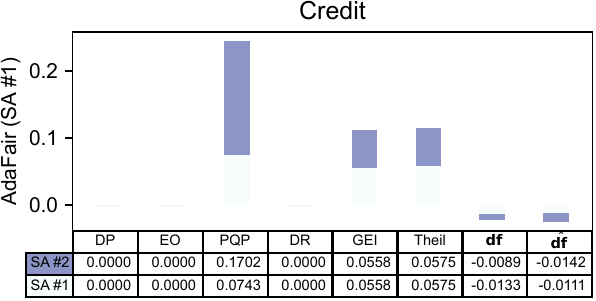}}
\hspace{1mm}
\subfloat[]{\label{subfig:tab5}%
\includegraphics[height=24mm]{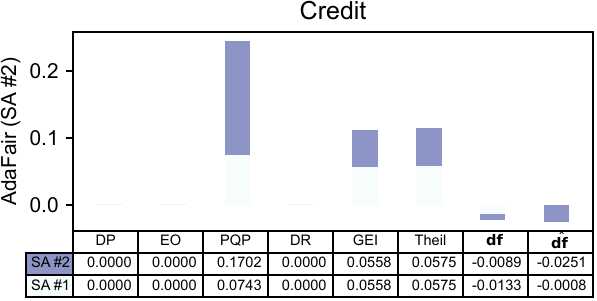}}
\vspace{-2mm}\caption{%
Another expression of sub-figures in the first row of Fig.~\ref{fig:radar}, where (a--e) corresponds to five columns (\ie{} learning algorithms) respectively.
}\label{fig:tabular}
\end{minipage}
\end{figure*}

\begin{figure}[t]
\begin{minipage}{\linewidth}
\centering%
\includegraphics[height=28mm]{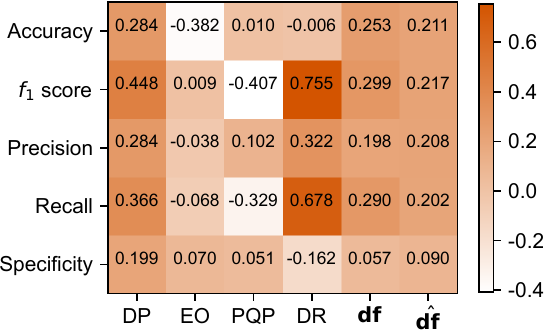}
\vspace{-4mm}%
\caption{Comparison of baseline fairness measures and the proposed \ppsdisabbr{}, evaluated on test data: 
Correlation heatmap between the evaluation metric and fairness. 
Note that $\newfist$ is calculated directly by Eq.~\eqref{eq:3}, and $\hat{\newfist}$ is calculated approximately with distances between sets computed using \ppsalgabbr{}. 
}\label{fig:delta,resbm}
\end{minipage}
\begin{minipage}{\linewidth}
\vspace{-1mm}
\centering 
\subfloat[]{\label{subfig:approx,a}
\includegraphics[width=3.45cm]{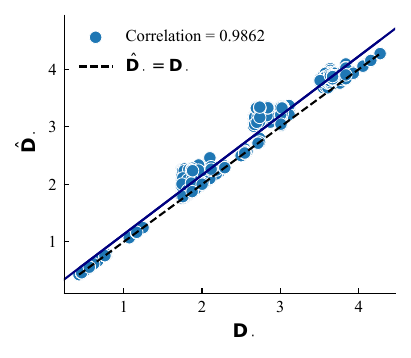}}
\hspace{3mm}
\subfloat[]{\label{subfig:approx,c}
\includegraphics[width=3.6cm]{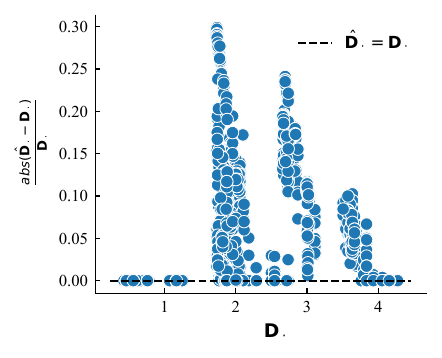}}
\\ \vspace{-4mm}
\subfloat[]{\label{subfig:approx,b}
\includegraphics[width=3.5cm]{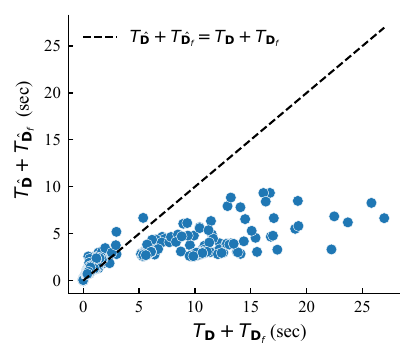}}
\hspace{3mm}
\subfloat[]{\label{subfig:approx,d}
\includegraphics[width=3.6cm]{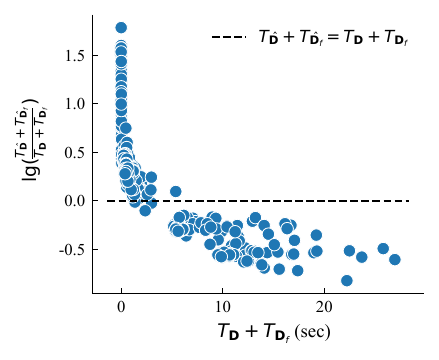}}
\vspace{-2.5mm}\caption{%
Comparison of approximation distances between sets with precise distances that are calculated directly by definition, evaluated on test data. 
(a) Scatter plot showing approximated values and precise values of distances between sets; 
(b) Relative difference comparison of \ppsalgabbr{} with direct computation concerning distance values. 
(c--d) Comparison of time cost (second) between \ppsalgabbr{} and direct computation based on Eq.~\eqref{eq:5}. 
}\label{fig:approx}
\end{minipage}
\end{figure}

\begin{figure}[t]%
\begin{minipage}{\linewidth}
\centering
\subfloat[]{\label{subfig:pm,a}
\includegraphics[width=3.7cm]{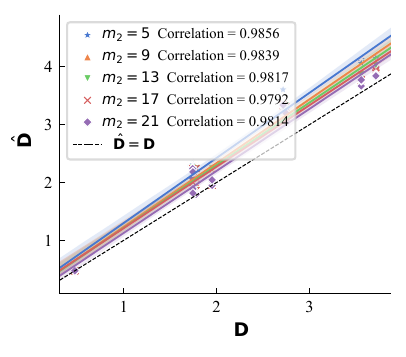}}
\subfloat[]{\label{subfig:pm,d}
\includegraphics[width=3.7cm]{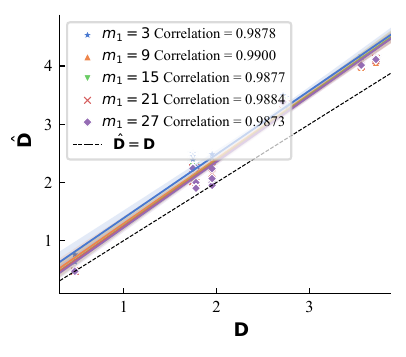}}
\\ \vspace{-4mm}
\subfloat[]{\label{subfig:pm,b}
\includegraphics[width=3.8cm]{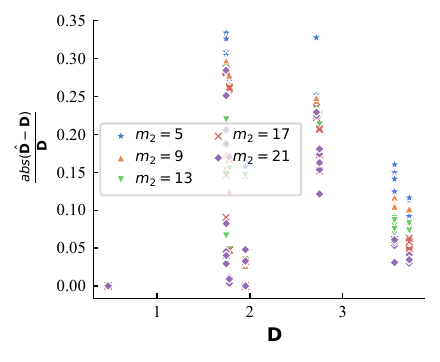}}
\subfloat[]{\label{subfig:pm,e}
\includegraphics[width=3.8cm]{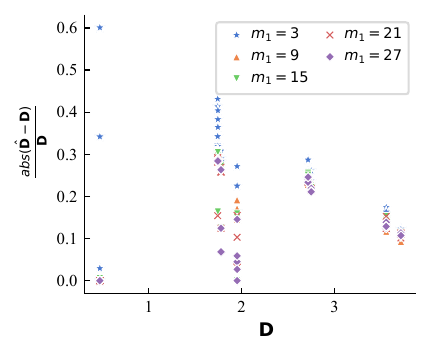}}
\\ \vspace{-4mm}
\subfloat[]{\label{subfig:pm,c}
\includegraphics[width=3.8cm]{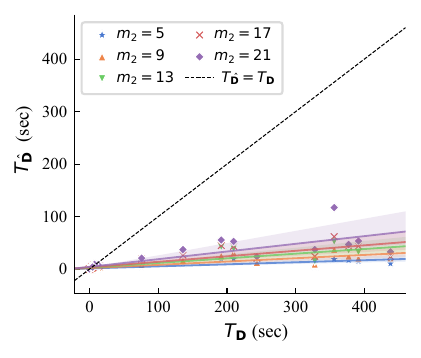}}
\subfloat[]{\label{subfig:pm,f}
\includegraphics[width=3.8cm]{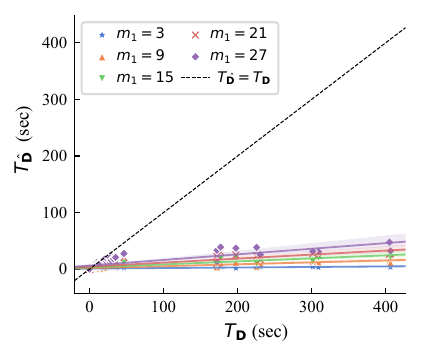}}
\vspace{-2.5mm}
\caption{Effects of hyperparameters $m_1$ and $m_2$ in \ppsalgabbr{}. 
(a) and (c), The effect of the hyperparameter $m_2$ on the distance value; (e) The effect of the hyperparameter $m_2$ on the time cost, where $m_1$ is set to 20. 
(b) and (d), The effect of the hyperparameter $m_1$ on the distance value; (f) The effect of the hyperparameter $m_1$ on the time cost, where $m_2$ is set to $\lceil2\lg~n\rceil$ in terms of $n$---the size of the corresponding dataset. 
}\label{fig:params}
\end{minipage}
\end{figure}
\begin{figure}[t]
\begin{minipage}{\linewidth}
\centering%
\subfloat[]{\includegraphics[height=31mm]{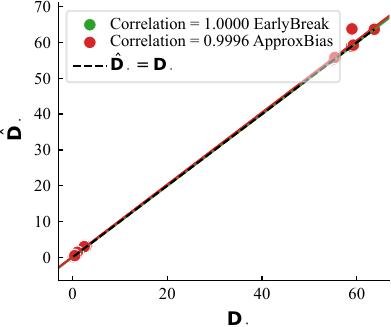}}
\subfloat[]{\includegraphics[height=31mm]{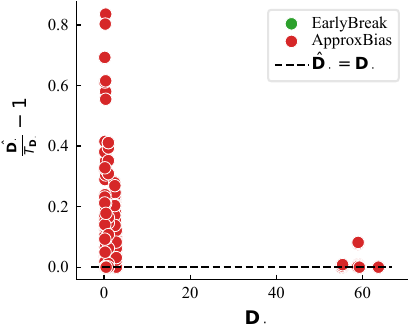}}
\\ \vspace{-4mm}
\subfloat[]{\includegraphics[height=31mm]{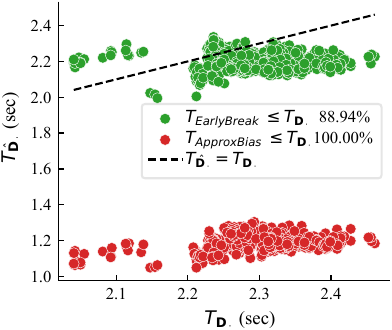}}
\subfloat[]{\includegraphics[height=31mm]{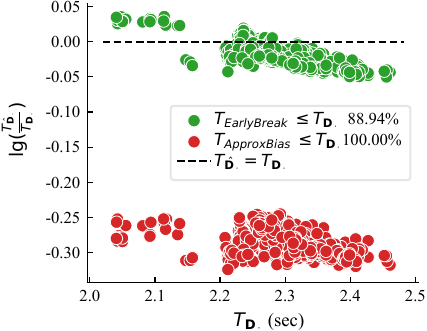}}
\vspace{-2.5mm}
\caption{Comparison between \ppsalgabbr{} and EarlyBreak \citep{taha2015efficient}, on the Income dataset. 
(a--b) shows whether the estimated values, generated by \ppsalgabbr{} or EarlyBreak, are accurate; (c--d) Time cost comparison, showing that \ppsalgabbr{} has higher computational efficiency than EarlyBreak. 
}\label{fig:earlybreak}
\end{minipage}
\end{figure}

\paragraph{Baseline methods}
To evaluate the validity of \ppsalgabbr{} in efficiently estimating the exact values of distances, we compare it with EarlyBreak \citep{taha2015efficient}, an efficient algorithm for calculating the exact distances. 
To evaluate the validity of \ppsdisabbr{} in capturing the discriminative degree of classifiers, we compare it with 
three commonly-used group fairness measures (that is, demographic parity (DP) \citep{feldman2015certifying,gajane2017formalizing}, equality of opportunity (EO) \citep{hardt2016equality}, and predictive quality parity (PQP) \citep{chouldechova2017fair,verma2018fairness})\footnote{%
Three commonly used group fairness measures of one classifier $f(\cdot)$ are evaluated as
\begin{scriptsize}
\begin{subequations}
\topequation\scriptsize
\begin{align}
    \mathrm{DP}(f) &= \lvert
    \mathbb{P}_\mathcal{D}[ f(\bm{x})\!=\!1 | \xpos\!=\!1 ] -
    \mathbb{P}_\mathcal{D}[ f(\bm{x})\!=\!1 | \xpos\!=\!0 ]
    \rvert \,,\\
    \mathrm{EO}(f) &= \lvert
    \mathbb{P}_\mathcal{D}[\, f(\bm{x})\!=\!1 | \xpos\!=\!1,\, y\!=\!1 ] - 
    \mathbb{P}_\mathcal{D}[\, f(\bm{x})\!=\!1 | \xpos\!=\!0,\, y\!=\!1 ]
    \rvert ,\\ 
    \mathrm{PQP}(f) &= \lvert
    \mathbb{P}_\mathcal{D}[\, y\!=\!1 | \xpos\!=\!1,\, f(\bm{x})\!=\!1 ] -
    \mathbb{P}_\mathcal{D}[\, y\!=\!1 | \xpos\!=\!0,\, f(\bm{x})\!=\!1 ]
    \rvert ,
\end{align}%
\end{subequations}%
\end{scriptsize}%
respectively, where $\insx= (\xneg,\xpos)$, 
$y$, and $f(\bm{x})$ are respectively the features, the true label, and the prediction of this classifier for one instance. Note that $\xpos=1$ and $0$ respectively mean that the instance $\insx$ belongs to the privileged and the marginalised groups.  
}, two individual fairness measures (that is, general entropy indices (GEI) \citep{speicher2018unified} and the Theil index (Theil) \citep{haas2019price})\footnote{%
For a constant $\gamma\notin\{0,1\}$, the generalised entropy indices for a problem with $n$ instances are defined, to quantify algorithmic unfairness, as
\begin{equation}
\topequation\scriptsize\textstyle
    \mathrm{GEI}^\gamma = 
    \frac{1}{n\gamma(\gamma-1)}
    \sum_{i=1}^n \Big(
        \big(\tfrac{b_i}{\mu}\big)^\gamma
    -1 \Big) \,,
\end{equation}
where benefits $b_i= f(\xneg_i,a_{1i})-y_i+1$ and $\mu=\sfrac{\sum_i b_i}{n}$. 
The Theil index is a special case for $\gamma=1$, that is, 
\begin{equation}
\topequation\scriptsize\textstyle
    \mathrm{Theil}= 
    \frac{1}{n}\sum_{i=1}^n
    \frac{b_i}{\mu} \log
    \big(\frac{b_i}{\mu}\big)
    \,.
\end{equation}
They are used additionally to group fairness measures to compare different algorithms and determine which one is considered the fairest from an individual perspective. 
}, and discriminative risk (DR)\footnote{%
The discriminative risk (DR) of this classifier is evaluated as
\begin{equation}
\topequation%
    \scriptsize
    \mathrm{DR}(f) = \mathbb{E}_\mathcal{D}[ 
        \mathbb{I}(f(\xneg,\xpos)\neq f(\xneg,\xqtb)) 
    ] \,,
\end{equation}%
where $\xqtb$ represents the disturbed sensitive attributes. DR reflects the bias degree of one classifier from both individual and group fairness aspects. 
} \citep{bian2023increasing_alt}. 
Note that only DR can take both individual and group fairness aspects into account at the same time.
Also note that BENN \citep{giloni2022benn} and the FAE-based metric \citep{wang2024procedural} are not included because they require extra trained models for unfairness evaluation, and therefore, they would break the same conditions in a comparison.

\paragraph{Implementation details} 

We mainly use bagging, AdaBoost, LightGBM \citep{ke2017lightgbm}, FairGBM \citep{cruz2022fairgbm}, and AdaFair \citep{iosifidis2019adafair} as learning algorithms, where FairGBM and AdaFair are two fairness-aware ensemble-based methods. 
Plus, certain kinds of classifiers are used in Section~\ref{subsec:RQ1}---including decision trees (DT), naive Bayesian (NB) classifiers, $k$-nearest neighbours (KNN) classifiers, Logistic Regression (LR), support vector machines (SVM), linear SVMs (linSVM), and multilayer perceptrons (MLP)---so that we have a larger learner pool to choose from based on different fairness-relevant rules. 
Standard 5-fold cross-validation is used in these experiments; in other words, in each iteration, the entire dataset is divided into two parts, with 80\% as the training set and 20\% as the test set. 
Also, features of datasets are scaled in preprocessing to lie between 0 and 1. 
Except for the experiments for \textbf{RQ3}, we set the hyperparameters $m_1=25$ and $m_2= \lceil 2\mathrm{lg}~n\rceil$ in other experiments, where $n$ is the size of the corresponding dataset. 
The value of 25 for $m_1$ is randomly chosen, which is neither too large to incur high computational costs, nor too small to lead to worse approximation effectiveness.

\subsection{Comparison between \ppsdisabbr{} and group fairness measures}
\label{subsec:RQ1}
In this experiment, we evaluate the performance of the proposed \ppsdisabbr{} compared with baseline group fairness measures. 

As groundtruth discriminative levels of classifiers remain unknown, and it is hard to directly compare different methods from that perspective, we compare the Pearson correlation coefficient between the performance and different fairness measures. 
The empirical results are reported in Figure~\ref{fig:delta,resbm}. 
Note that the reported \ppsdisabbr{} includes both $\newfist$ and $\hat{\newfist}$, where $\hat{\newfist}$ is the approximated value of the precise $\newfist$ via \ppsalgabbr{}. 
While \ppsdisabbr{} does not demonstrate a strong correlation with evaluation performance, its correlation is better than EO and PQP, shown in Figure~\ref{fig:delta,resbm}. 
The reason might be that \ppsdisabbr{} is only used to measure the extra bias introduced by classifiers or in the learning process, unlike other fairness measures that do not distinguish extra bias from overall discrimination within.

We also compare the test performance of the classifier that is chosen on training data based on fairness-relevant rules. 
In other words, we use a rule of `$\beta\!\cdot\!(1\!-\text{performance})\!+\!(1\!-\!\beta)\!\cdot\text{fairness}$' to select the best classifier and then report the corresponding test performance of these selected classifiers. 
The empirical results are reported in  Figure~\ref{fig:comp}. 
We observe from Figures~\ref{fig:comp}\subref{subfig:comp,a} and \ref{fig:comp}\subref{subfig:comp,b} that using \ppsdisabbr{} (\ie{} $\newfist$ and $\hat{\newfist}$) achieves the first and second best of the test accuracy over using three group fairness measures and DR, and using DR also achieves relatively good performance, 
which means it is indeed useful to consider the discriminative level of classifiers from both individual- and group-fairness aspects, while DP, EO, and PQP only consider group-fairness aspects. 
It also demonstrates that \ppsdisabbr{} could measure the discriminative level of classifiers somehow from both individual- and group-fairness perspectives, just like DR does. 
Using \ppsdisabbr{} also demonstrates superior performance in Figures~\ref{fig:comp}\subref{subfig:comp,alt,a} and \ref{fig:comp}\subref{subfig:comp,alt,b}. 
As for Figures~\ref{fig:comp}\subref{subfig:comp,d} and \ref{fig:comp}\subref{subfig:comp,alt,d}, using \ppsdisabbr{} (\ie{} $\hat{\newfist}$) presents relatively weaker fairness performance, especially when compared with DR. 
This is understandable because, unlike all other baseline fairness measures, \ppsdisabbr{} only measures the newly-introduced bias by classifiers in the learning, without reflecting the bias within the data. 
In other words, \ppsdisabbr{} is unlikely to recognise the primitive bias hidden in the data if no extra bias is introduced by the learner/classifier. 
Thus, \ppsdisabbr{} probably reflects partial bias information in the learning compared with other fairness measures, leading to relatively inferior performance. 
However, it is worth noting that using \ppsdisabbr{} shows pretty good results when evaluated from the perspective of PQP in Figures~\ref{fig:comp}\subref{subfig:fair,k2} and \ref{fig:comp}\subref{subfig:falt,k2}. 
Except for them, similar observations could also be found in other sub-figures of Figure~\ref{fig:comp}.\looseness=-1

\subsection{Comparison between \ppsdisabbr{} and individual fairness metrics}
In this experiment, we evaluate the performance of \ppsdisabbr{} in comparison with both group and individual fairness, with empirical results presented in Figures~\ref{fig:radar} and \ref{fig:tabular}. 
As we can see from the results on PPR and PPVR datasets (third to fourth rows) in Figure~\ref{fig:radar}, AdaFair (fourth to fifth columns) 
presents good performance concerning group fairness (especially, DP and EO) yet still exhibits bias from the perspective of individual fairness (GEI and Theil), and this phenomenon is indicated in \ppsdisabbr{}, suggesting that discrimination is not fully eliminated through AdaFair. 
Meanwhile, both \ppsdisabbr{} and group fairness indicate the existence of slight bias in AdaBoost and LightGBM (second and third columns). 
Considering that \ppsdisabbr{} can only capture the extra bias introduced in the learning procedure, we believe \ppsdisabbr{} is already reflecting both group- and individual-fairness aspects in its own way as best as it can. 
Furthermore, we particularly notice that, in Fig.~\ref{fig:tabular}\subref{subfig:tab4} and \ref{fig:tabular}\subref{subfig:tab5}, both DP and EO equal zero, and $\newfist(f)<0$, 
indicating that AdaFair mitigates bias in learning, yet bias is still indicated concerning PQP, GEI, and Theil. This is an interesting observation, suggesting that there is still room for AdaFair to improve and further mitigate bias hidden in data.

\subsection{Validity of \ppsalgabbr{} to approximate distances between sets for Euclidean spaces}
\label{subsec:RQ2}

In this experiment, we evaluate the performance of the proposed \ppsalgabbr{} compared with the precise distance that is computed by definition directly. 
To verify whether \ppsalgabbr{} could achieve the true distance between sets accurately and in a timely manner, we employ scatter plots to compare their values and time cost, presented in Fig.~\ref{fig:approx} to \ref{fig:params} (and Fig.~\ref{fig:approx,cont} of the Appendix). 
Particularly, we compare \ppsalgabbr{} with EarlyBreak \citep{taha2015efficient}, an efficient algorithm for calculating the exact distances, in Fig.~\ref{fig:earlybreak} and \ref{fig:earlybreak,cont} of the Appendix. From them, we observe that the estimated values produced by \ppsalgabbr{} are close to the corresponding exact values of distances, and that consume much less time than those produced by EarlyBreak. This suggests that \ppsalgabbr{} has higher computational efficiency.\looseness=-1

Besides, as we can see from Figure~\ref{fig:approx}\subref{subfig:approx,a}, the approximated values of distance using \ppsalgabbr{} are highly correlated with their correspondingly precise values; 
Besides, their linear fit line and the identity line (that is, $f(x)\!=\!x$) are near and almost parallel. 
Both observations mean that the approximation values of distance via \ppsalgabbr{} are valid and acceptable. 
Figure~\ref{fig:approx}\subref{subfig:approx,c} also presents that the relative difference between the approximation values and direct computation of distances is comparatively small and acceptable. 
It also shows that the relative difference between them becomes smaller as the distance $\newDist_\cdot$ increases. 
As for the time cost of \ppsalgabbr{} shown in Figure~\ref{fig:approx}\subref{subfig:approx,b}, all approximated values of distance cost less time than their direct computation of precise distance, and we find in practice that the time saving is usually more satisfactory when the used dataset is pretty large, such as on the Income dataset. 
Similar observations could also be found in Figures~\ref{fig:approx}\subref{subfig:approx,d}, \ref{fig:params}\subref{subfig:pm,c}, \ref{fig:params}\subref{subfig:pm,f}, \ref{fig:approx,cont}\subref{subfig:timecost,a}, and \ref{fig:approx,cont}\subref{subfig:timecost,c}, 
where in most cases of large datasets that need a longer time to proceed with the direct computation of distances, the time cost of \ppsalgabbr{} would not be worse than that of direct computation; yet in smaller dataset cases, the time cost of direct computation is less and completely acceptable, and using \ppsalgabbr{} would be less necessary.\looseness=-1

As for Figure~\ref{fig:approx,cont}, it is a more detailed version of Figure~\ref{fig:approx} by separating $\newDist_f$ and $\newDist$ from $\newDist_\cdot$ in Eq.~\eqref{eq:5}. Briefly speaking, Figures~\ref{fig:approx,cont}\subref{subfig:4cont,a} and \ref{fig:approx,cont}\subref{subfig:4cont,c} exhibit the same observation as in Figure~\ref{fig:approx}\subref{subfig:approx,a}; 
Figures~\ref{fig:approx,cont}\subref{subfig:4cont,b} and \ref{fig:approx,cont}\subref{subfig:4cont,d} exhibit the same observation at in Figure~\ref{fig:approx}\subref{subfig:approx,c}; 
Figures~\ref{fig:approx,cont}\subref{subfig:timecost,a} and \ref{fig:approx,cont}\subref{subfig:timecost,c} exhibit the same observation as in Figure~\ref{fig:approx}\subref{subfig:approx,b}; 
Figures~\ref{fig:approx,cont}\subref{subfig:timecost,b} and \ref{fig:approx,cont}\subref{subfig:timecost,d} exhibit the same observation as in Figure~\ref{fig:approx}\subref{subfig:approx,d}. 
To summarise, all these figures show that the approximated distances produced by \ppsalgabbr{} closely match the exact distances from direct computation, despite the use of random projections. These empirical evidences also suggest that \ppsalgabbr{} is robust to the randomness introduced by projection vectors and does not exhibit unstable behaviour across different runs.

\subsection{Effect of hyperparameters $m_1$ and $m_2$}
\label{subsec:RQ3}

In this subsection, we investigate whether different choices of hyperparameters (that is, $m_1$ and $m_2$) would affect the performance of \ppsalgabbr{} or not. 
Different $m_2$ values are tested when $m_1$ is set, and vice versa, with empirical results presented in Figure~\ref{fig:params}. 
As we can see from Figures~\ref{fig:params}\subref{subfig:pm,c} and \ref{fig:params}\subref{subfig:pm,f}, obtaining approximated values of distance distinctly costs less time than that of precise values, while increasing $m_2$ (or $m_1$) would cost more time but still be less than that of precise values. 
As for the approximation performance shown in Figures~\ref{fig:params}\subref{subfig:pm,a} and \ref{fig:params}\subref{subfig:pm,d}, all approximated values are highly correlated to the precise values of distance no matter how small $m_1$ or $m_2$ is, which means the effect of less proper choices of hyperparameters is relatively unapparent; As $m_1$ (or $m_2$) increases, the approximated values would become closer to the precise values of distance. 
This means that the approximation quality remains stable over a reasonably wide range of hyperparameter values, and that the risk associated with suboptimal hyperparameter selection is low in practice.

\begin{figure*}[t]
\begin{minipage}{\textwidth}
\centering%
\subfloat[]{\label{subfig:4cont,a}%
\includegraphics[height=30mm]{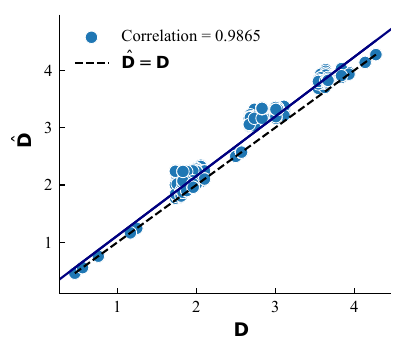}}
\subfloat[]{\label{subfig:4cont,b}%
\includegraphics[height=30mm]{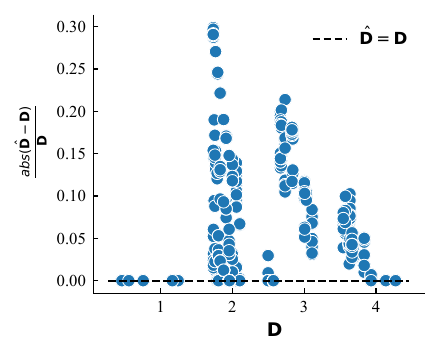}}
\subfloat[]{\label{subfig:4cont,c}%
\includegraphics[height=30mm]{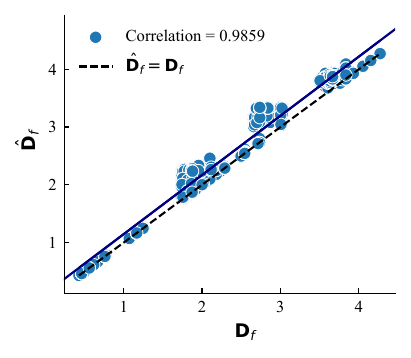}}
\subfloat[]{\label{subfig:4cont,d}%
\includegraphics[height=30mm]{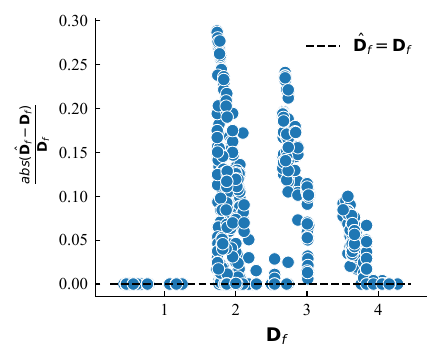}}
\\ \vspace{-.3cm}
\subfloat[]{\label{subfig:timecost,a}
\includegraphics[height=30mm]{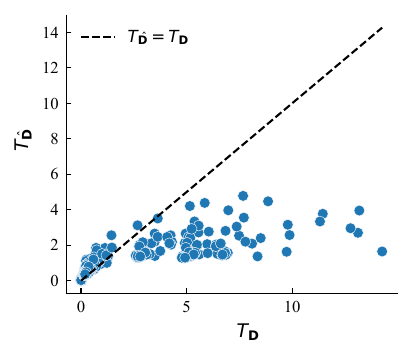}}
\subfloat[]{\label{subfig:timecost,b}
\includegraphics[height=30mm]{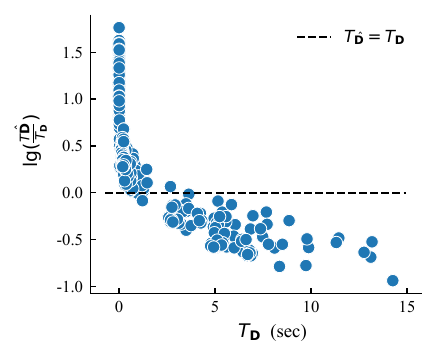}}
\subfloat[]{\label{subfig:timecost,c}
\includegraphics[height=30mm]{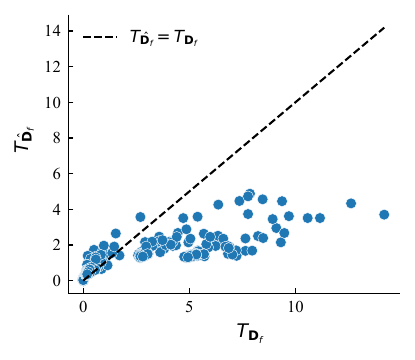}}
\subfloat[]{\label{subfig:timecost,d}
\includegraphics[height=30mm]{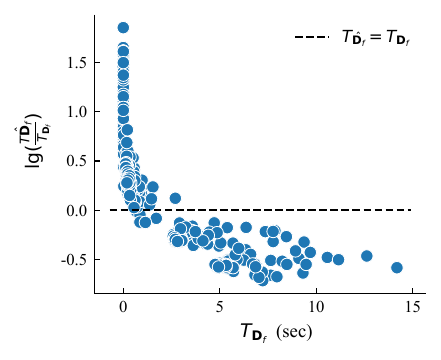}}
\vspace{-2mm}
\caption{Continuation of Figure~\ref{fig:approx}: 
Comparison of approximation distances using \ppsalgabbr{} with precise distances between sets that are calculated directly by definition, evaluated on test data. 
(a) Scatter plot showing approximated value $\hat{\newDist}$ and precise value $\newDist$ of distance between sets in Eq.~\eqref{eq:1}, and 
(b) Relative difference comparison between $\hat{\newDist}$ and $\newDist$; 
(c) Scatter plot showing approximated value $\hat{\newDist}_f$ and precise value $\newDist_f$ of distance between sets in Eq.~\eqref{eq:2}, and 
(d) Relative difference comparison between $\hat{\newDist}_f$ and $\newDist_f$; 
(e--f) Comparison of time cost between getting $\hat{\newDist}$ and $\newDist$, and (g--h) Comparison of time cost between getting $\hat{\newDist}_f$ and $\newDist_f$. 
Note that Figure~\protect\myref{fig:approx}{subfig:approx,a} is divided into (a) and (c), Figure~\protect\myref{fig:approx}{subfig:approx,c} is divided into (b) and (d), and Figures~\protect\myref{fig:approx}{subfig:approx,b} and \protect\myref{fig:approx}{subfig:approx,d} are divided analogously.\looseness=-1 
}\label{fig:approx,cont}
\end{minipage}
%
\begin{minipage}{\textwidth}
\vspace{-1mm}
\centering
\subfloat[]{\includegraphics[height=31mm]{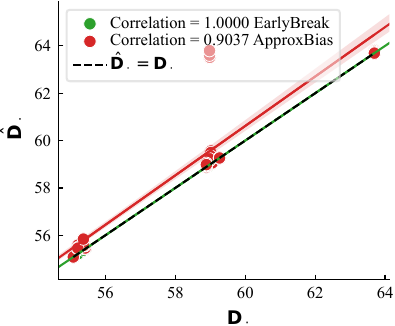}}
\hspace{2mm}
\subfloat[]{\includegraphics[height=31mm]{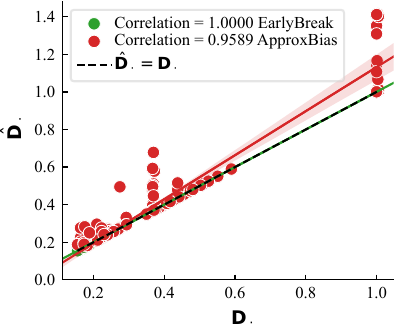}}
\hspace{2mm}
\subfloat[]{\includegraphics[height=31mm]{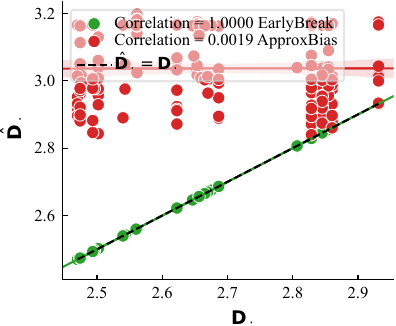}}
\hspace{2mm}
\subfloat[]{\includegraphics[height=31mm]{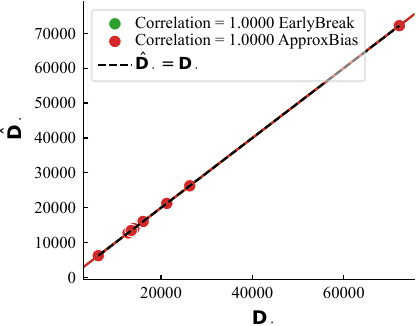}}
\\ \vspace{-3mm}
\subfloat[]{\includegraphics[height=31mm]{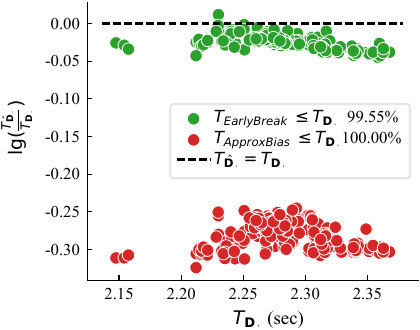}}\hspace{2mm}
\subfloat[]{\includegraphics[height=31mm]{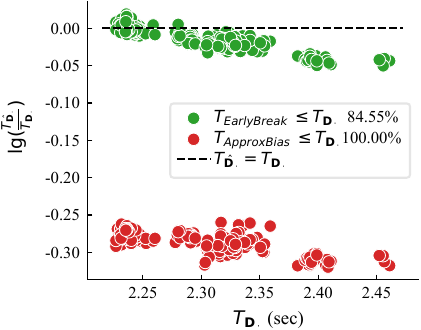}}\hspace{2mm}
\subfloat[]{\includegraphics[height=31mm]{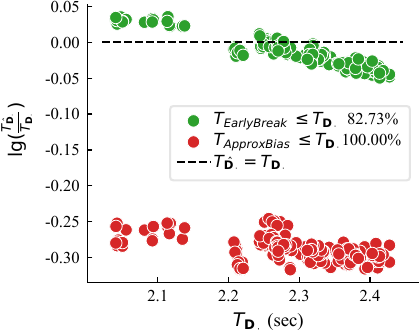}}\hspace{2mm}
\subfloat[]{\includegraphics[height=31mm]{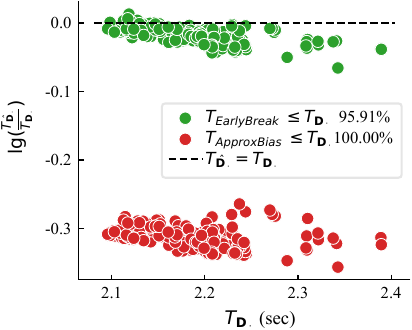}}
\vspace{-2mm}
\caption{Continuation of Figure~\ref{fig:earlybreak}: 
Four columns correspond to different preprocessing methods (standardization, normalization, min-max scaling, and no preprocessing, respectively). 
(a--d) Comparison between the estimated values and exact values of $\newDist_\cdot$; (e--h) Computational efficiency comparison.
}\label{fig:earlybreak,cont}
\end{minipage}
\end{figure*}

\section{Conclusion}
In this paper, we investigate how to measure the discrimination level of classifiers from both individual and group fairness aspects and present a novel harmonic fairness measure (\ppsdisabbr{}) based on distances between sets. 
To facilitate the calculation of distances and reduce its time cost from $O(n^2)$ to $O(n\mathrm{log}~n)$, we propose an approximation algorithm \ppsalgabbr{} and further present its algorithmic effectiveness analysis. 
The empirical results have demonstrated that the proposed \ppsdisabbr{} and \ppsalgabbr{} are valid and effective. 
However, the limitation of \ppsalgabbr{} is that it may not converge and thus can only reach approximated values of the distance. 
\ppsdisabbr{} also has a limitation that it can only evaluate the extra bias introduced in the learning procedure. In other words, if the learning algorithm is purely ``fair'' and no extra bias is introduced, it may not be able to detect whether discrimination exists in the data already. 
It would therefore be beneficial if some other fairness measures (such as GEI, Theil, and DR) are also considered in model evaluation, accompanying \ppsdisabbr{}. 
In practice, we indeed observed many zeros and even negative values, which leaves an open question about how to interpret them in such cases. In the future, we plan to explore the possibility of a fuller bias evaluation in this direction. To exploit it for improving learning algorithms could be interesting as well. 
\looseness=-1

\section*{Acknowledgments}

This research is funded by the European Union (MSCA, FairML, project no. 101106768). 

Views and opinions expressed are those of the author(s) only and do not necessarily reflect those of the European Union or the Research Executive Agency. Neither the European Union nor the granting authority can be held responsible for them.


{\appendices
\section{Additional Empirical Results}
\label{sec:addtl}
In this section, we list additional empirical results of Section~\ref{subsec:RQ2} that are left out in the main paper to save space, that is, Figures~\ref{fig:approx,cont} and \ref{fig:earlybreak,cont}.

}

\bibliographystyle{IEEEtran}
\bibliography{nus_title_iso4,refsmac}

 





\end{document}